\pdfoutput=1
\documentclass{article}

\usepackage[accepted]{aistats2026}
\usepackage{algorithm}
\usepackage{algorithmic}
\usepackage{amsmath}
\usepackage{amssymb}
\usepackage{mathtools}
\usepackage{amsthm}
\usepackage{bm}
\usepackage{cite}
\usepackage{url}
\usepackage{color}
\usepackage{booktabs}
\usepackage{multirow}
\usepackage{graphicx}
\usepackage{enumitem}
\usepackage[colorlinks,linkcolor=blue,citecolor=blue,urlcolor=blue]{hyperref}

\newtheorem{dfn}{Definition}[section]

\newtheorem{lem}{Lemma}[section]

\newcommand{\UPDATE}[1]{{#1}}

\begin{document}

\runningtitle{Lipschitz Multiscale Deep Equilibrium Models}

\twocolumn[

\aistatstitle{Lipschitz Multiscale Deep Equilibrium Models:\\ A Theoretically Guaranteed and Accelerated Approach}

\aistatsauthor{ Naoki Sato \And Hideaki Iiduka}

\aistatsaddress{ Meiji University \And Meiji University} ]

\begin{abstract}
Deep equilibrium models (DEQs) achieve infinitely deep network representations without stacking layers by exploring fixed points of layer transformations in neural networks.
Such models constitute an innovative approach that achieves performance comparable to state-of-the-art methods in many large-scale numerical experiments, despite requiring significantly less memory.
However, DEQs face the challenge of requiring vastly more computational time for training and inference than conventional methods, as they repeatedly perform fixed-point iterations with no convergence guarantee upon each input.
Therefore, this study explored an approach to improve fixed-point convergence and consequently reduce computational time by restructuring the model architecture to guarantee fixed-point convergence.
Our proposed approach for image classification, Lipschitz multiscale DEQ, has theoretically guaranteed fixed-point convergence for both forward and backward passes by hyperparameter adjustment, achieving up to a 4.75$\times$ speed-up in numerical experiments on CIFAR-10 at the cost of a minor drop in accuracy.
\end{abstract}

\section{Introduction}
The past decade has seen remarkable progress in explicit deep learning, largely driven by the development of increasingly deep neural networks. Architectures like ResNet \cite{He2016Dee} and Transformers \cite{Vaswani2017Att} have demonstrated that performance on a wide range of tasks consistently improves with model depth. This ``deeper is better" paradigm has led to models with tens or even hundreds of explicit layers. However, this pursuit of depth comes at a significant cost: (1) substantial memory requirements for storing activations of all intermediate layers for backpropagation, and (2) diminishing returns in performance, where simply stacking more layers can lead to vanishing gradients and optimization difficulties. These challenges have motivated a search for alternative ways to conceptualize and leverage depth in neural networks.

Deep equilibrium models (DEQs) \cite{Bai2019Dee, Bai2020Mul} offer a radical departure from the explicit, layer-stacking approach. Instead of defining a finite sequence of transformations, a DEQ layer conceptualizes its forward pass as the solution to a fixed-point equation. By iteratively applying a single, weight-tied nonlinear transformation until it reaches an equilibrium point, a DEQ effectively creates a neural network of infinite depth. This implicit formulation carries a profound advantage: a drastically reduced memory footprint for training. Unlike explicit $N$-layer models that require $\mathcal{O}(N)$ memory to store intermediate activations for backpropagation, a DEQ's memory requirement is $\mathcal{O}(1)$ and remains constant, effectively decoupling model depth from memory cost. This allows for theoretically infinite-depth representations without the memory constraints that plague traditional deep networks. Impressively, both the DEQ-Transformer \cite{Bai2019Dee} for text tasks and multiscale DEQs (MDEQs) \cite{Bai2020Mul} for computer vision tasks achieve this excellent memory efficiency while delivering performance comparable to state-of-the-art models.

Despite the promise of DEQs, their practical adoption is severely hindered by a singular, critical issue: prohibitive training and inference times. Empirical results show that these models can be 2.5 to 3 times slower than their explicit counterparts \cite{Bai2019Dee, Bai2020Mul}. We argue that this computational burden is a direct symptom of a deeper theoretical problem: the lack of a guaranteed unique solution to the fixed-point problem they aim to solve. For each new input, a DEQ must solve the following fixed-point problem using some fixed-point solver.
\begin{align}\label{eq:forward}
\text{Find } \bm{z} \in \text{Fix}(f_{\bm{\theta}}) := \left\{ \bm{z} \in \mathbb{R}^d \colon f_{\bm{\theta}}(\bm{z}; \bm{x}) = \bm{z} \right\},
\end{align}
where $\bm{z}$ is hidden states, $\bm{x}$ is input, $\bm{\theta}$ is weight parameters, and $f_{\bm{\theta}} \colon \mathbb{R}^d \to \mathbb{R}^d$ is the single, repeated transformation. 
From the Banach fixed-point theorem \cite{Banach1922Leb, Kreyszig1978Int}, a unique solution to this problem is guaranteed to exist only if the mapping $f_{\bm{\theta}}$ is a contraction mapping. 
Here, a mapping $f \colon \mathbb{R}^d \to \mathbb{R}^d$ is said to be a contraction mapping if there exists $r \in (0,1)$ such that, for all $\bm{z}_1, \bm{z}_2 \in \mathbb{R}^d$, $\| f(\bm{z}_1) - f(\bm{z}_2) \| \leq r \| \bm{z}_1 - \bm{z}_2 \|$. In other words, being a contraction mapping is equivalent to being an $L$-Lipschitz mapping for $L<1$. 
It is known that if a mapping $f$ is a contraction mapping, then the sequence generated by the Banach fixed-point approximation method, $\bm{z}_{t+1} := f(\bm{z}_t)$, converges to a unique fixed point of $f$.
However, $f_{\bm{\theta}}$ does not generally satisfy this contractility, because $f_{\bm{\theta}}$ is a complex combination of convolutions, regularization, nonlinear activations, etc. As a result, fixed-point solvers are forced to perform iterative processes with no convergence guarantee, resulting in unpredictable and often explosive computational costs for both training and inference.

In the present study, our aim was to redesign the model architecture (i.e., $f_{\bm{\theta}}$) so that there exists a unique solution to the fixed-point problem \eqref{eq:forward}, thereby providing theoretical guarantees for fixed-point convergence, accelerating convergence, and reducing the enormous computational time required for DEQ training and inference. Our approach is rooted in directly controlling the Lipschitz constant of the fixed-point mapping. While the principles of our approach could be generalized, the model structures of DEQ variants like the DEQ-Transformer and MDEQ are fundamentally different. Therefore, we focus here exclusively on MDEQ for computer vision tasks to rigorously validate the effectiveness of our techniques. Our contribution can be summarized as follows:
\begin{enumerate}[leftmargin=*,itemsep=0pt, topsep=0pt]
\item We propose a new variant of MDEQ, Lipschitz MDEQ, which incorporates several modifications to the MDEQ architecture. In contrast to MDEQ, the Lipschitz constant of the fixed-point mapping appearing in Lipschitz MDEQ is composed of several hyperparameters that can be determined by the user. \textbf{(Section \ref{sec:3.3})}

\item We demonstrate a combination of hyperparameters that ensures the fixed-point mappings appearing in both the forward pass and backward pass of Lipschitz MDEQ is a contraction mapping. Consequently, Lipschitz MDEQ is a member of the DEQ family that theoretically guarantees fixed-point convergence. \textbf{(Section \ref{sec:3.4})}

\item Our numerical experiments demonstrate that Lipschitz MDEQ significantly improves fixed-point convergence, reduces MDEQ computation time by up to 4.75-fold with theoretical guarantees, and enables trade-offs between accuracy and speed by adjusting the Lipschitz constant. \textbf{(Section \ref{sec:exp})}
\end{enumerate}

\section{Background \& Related Work}
For related work on other theory, applications, adversarial robustness, the implicit layers family, and Lipschitz networks, see Appendix \ref{sec:other}.
\paragraph{Weight-tied Models}
DEQs originate from the breakthroughs of models such as TrellisNet \cite{Bai2019Tre} and the Universal Transformer \cite{Dehghani2019Uni}, which have shared weights across all layers.
Although these models achieve constant memory in the sense that the number of parameters does not increase with the number of layers, they still have the drawback of the computational cost being proportional to the number of layers, due to backpropagation.
Bai et al.\ observed experimentally that the output of the intermediate layer of these models, i.e., the hidden state $\bm{z}$, converges to a fixed point of the transformation $f_{\bm{\theta}}$ as the number of layers increases \cite{Bai2019Dee}. They proposed the DEQ approach, shifting to directly finding the fixed point $\bm{z}^\star$ of $f_{\bm{\theta}}$ rather than adding layers to obtain it. This approach was incorporated into the transformer and demonstrated extremely powerful performance in large-scale sequence modeling tasks \cite{Bai2019Dee}. Soon after, MDEQ, incorporating a multiscale mechanism to make it applicable to computer vision tasks, was proposed and shown to demonstrate extremely good performance in both image classification and semantic segmentation tasks \cite{Bai2020Mul}.

\paragraph{Deep Equilibrium Models}
A DEQ receives an input and solves the fixed-point problem \eqref{eq:forward} using some fixed-point solver in the forward pass. The number of iterations by the solver corresponds to the number of layers in the DEQ.
Once the fixed point $\bm{z}^\star$ is found, it is used to define an empirical loss function $\ell_{\bm{z}^\star}(\bm{\theta})$, and the loss minimization procedure proceeds using a standard optimizer such as stochastic gradient descent or Adam. The required gradient can be obtained in the following form by applying the chain rule and implicit function theorem \cite{Krantz2012The}:
\begin{align}\label{eq:back}
\frac{\partial \ell_{\bm{z}^\star}(\bm{\theta})}{\partial \bm{\theta}} = \underbrace{\frac{\partial \ell_{\bm{z}^\star}(\bm{\theta})}{\partial \bm{z}^\star} (I - J_{f_{\bm{\theta}}}(\bm{z}^\star))^{-1}}_{=: A} \frac{\partial f_{\bm{\theta}}(\bm{z}^\star; \bm{x})}{\partial \bm{\theta}},
\end{align}
where $J_{f_{\bm{\theta}}}$ is the Jacobian matrix of $f_{\bm{\theta}}$ at $\bm{z}^\star$. To avoid the expensive computation of the inverse of the Jacobian matrix, the term $A$ is calculated in the backward pass by solving the following equation:
\begin{align}\label{eq:linear}
\bm{\mathrm{x}} \left( I - J_{f_{\bm{\theta}}}(\bm{z}^\star) \right) - \frac{\partial \ell_{\bm{z}^\star}(\bm{\theta})}{\partial \bm{z}^\star} = \bm{0}. 
\end{align}
The first term can be efficiently computed for any $\bm{\mathrm{x}}$ via autograd packages such as PyTorch \cite{Paszke2019PyT}. The linear equation \eqref{eq:linear} is equivalent to the problem of finding a fixed point that satisfies 
\begin{align}\label{eq:back-fix}
\bm{\mathrm{x}} = T(\bm{\mathrm{x}}) \ \text{under}\  T(\bm{\mathrm{x}}) := \bm{\mathrm{x}} J_{f_{\bm{\theta}}}(\bm{z}^\star) + \frac{\partial \ell_{\bm{z}^\star}(\bm{\theta})}{\partial \bm{z}^\star}.
\end{align}
In other words, DEQs solve two different fixed-point problems in the backward and forward passes.
In both fixed-point problems, the initial research used the Broyden's method \cite{Broyden1965ACl} as the solver, but recent research has focused on the Anderson acceleration \cite{Anderson1965Ite}, which is provably equivalent to a multi-secant quasi-Newton method \cite{Fang2009Two}, has become mainstream. In addition, some DEQ variants \cite{Pabbaraju2021Est, Gabor2024Pos} use the Peaceman-Rachford method \cite{Peaceman1955The} or the Banach fixed-point approximation method \cite{Banach1922Leb} under different formulations. Recently, a RevDEQs that enables the use of backpropagation in the backward pass were proposed that use a reversible fixed-point solver \cite{McCallum2025Rev}.

\paragraph{Theory}
Generally, for the fixed-point problems solved by DEQs, the existence and uniqueness of fixed points are not guaranteed. Several prior studies have proposed variants of DEQs to ensure these properties. Winston et al.\ introduced Monotone Operator Equilibrium Networks (monDEQ), ensuring the existence of a fixed point by designing the layer as a monotone operator \cite{Winston2020Mon}. Revay et al.\ achieved uniqueness by enforcing contraction under a learnable non-Euclidean norm, which relaxes constraints on the linear layers \cite{Revay2020Lip}. More recently, Sittoni et al.\ proposed Subhomogeneous DEQ (SubDEQ), guaranteeing the existence and uniqueness of a bounded solution by imposing subhomogeneity on the mapping \cite{Sittoni2024Sub}. Concurrently, Gabor et al.\ introduced Positive Concave DEQ (pcDEQ), which ensures a unique fixed point by decomposing the function into contractive and non-expansive parts \cite{Gabor2024Pos}.

\paragraph{Acceleration Techniques}
DEQs are also known for their extremely low training and inference speeds.
To address this issue, several prior studies have successfully reduced the computational cost of the backward pass \eqref{eq:back} by utilizing approximate gradients \cite{Geng2021OnT, Ramzi2022SHI, Fung2022JFB, Nguyen2022Eff}. In particular, Fung et al.'s Jacobian-free backpropagation (JFB) achieves acceleration by replacing $(I- J_{f_{\bm{\theta}}}(\bm{z}^\star))^{-1}$ with the identity matrix, whereas Geng et al.\ achieved acceleration by using the phantom gradient of a surrogate loss $\frac{1}{2}\| f_{\bm{\theta}}(\bm{z}^\star) - \bm{z}^\star \|^2$, which is easier to compute \cite{Geng2021OnT}.
Additionally, there are techniques that utilize differences in convergence rates between dimensions within the hidden dimensions of the DEQ to omit computations \cite{Wang2024Del}, as well as methods that introduce learnable neural fixed-point solvers to streamline fixed-point iterations \cite{Bai2022Neu}. While slightly different in nature, Gurumurthy et al.'s proposed method significantly reduces computation time by simultaneously solving fixed-point iterations and input optimization for specific tasks \cite{Gurumurthy2021Joi}. From the perspective of accelerating DEQs, the prior study closest to our approach is that proposing Jacobian regularization \cite{Bai2021Sta}. While Jacobian regularization successfully stabilizes training by the employment of Jacobian norm $\| J_{f_{\bm{\theta}}}(\bm{z^\star}) \|_{\rm{F}}$ as a penalty term for the backward pass, our approach achieves acceleration grounded in the theory of both the forward and backward passes by constraining the supremum of $\| J_{f_{\bm{\theta}}}(\bm{z}) \|_2$ to be less than 1. Furthermore, although the Jacobian regularization approach does not have theoretical convergence guarantees, it has the practical advantage of incurring minimal performance degradation. Note that Jacobian regularization stabilizes fixed-point convergence to achieve high accuracy with fewer iterations, but it does not guarantee or accelerate fixed-point convergence.

\begin{figure*}[htbp]
\begin{minipage}[t]{1\linewidth}
\centering
\includegraphics[width=1\linewidth]{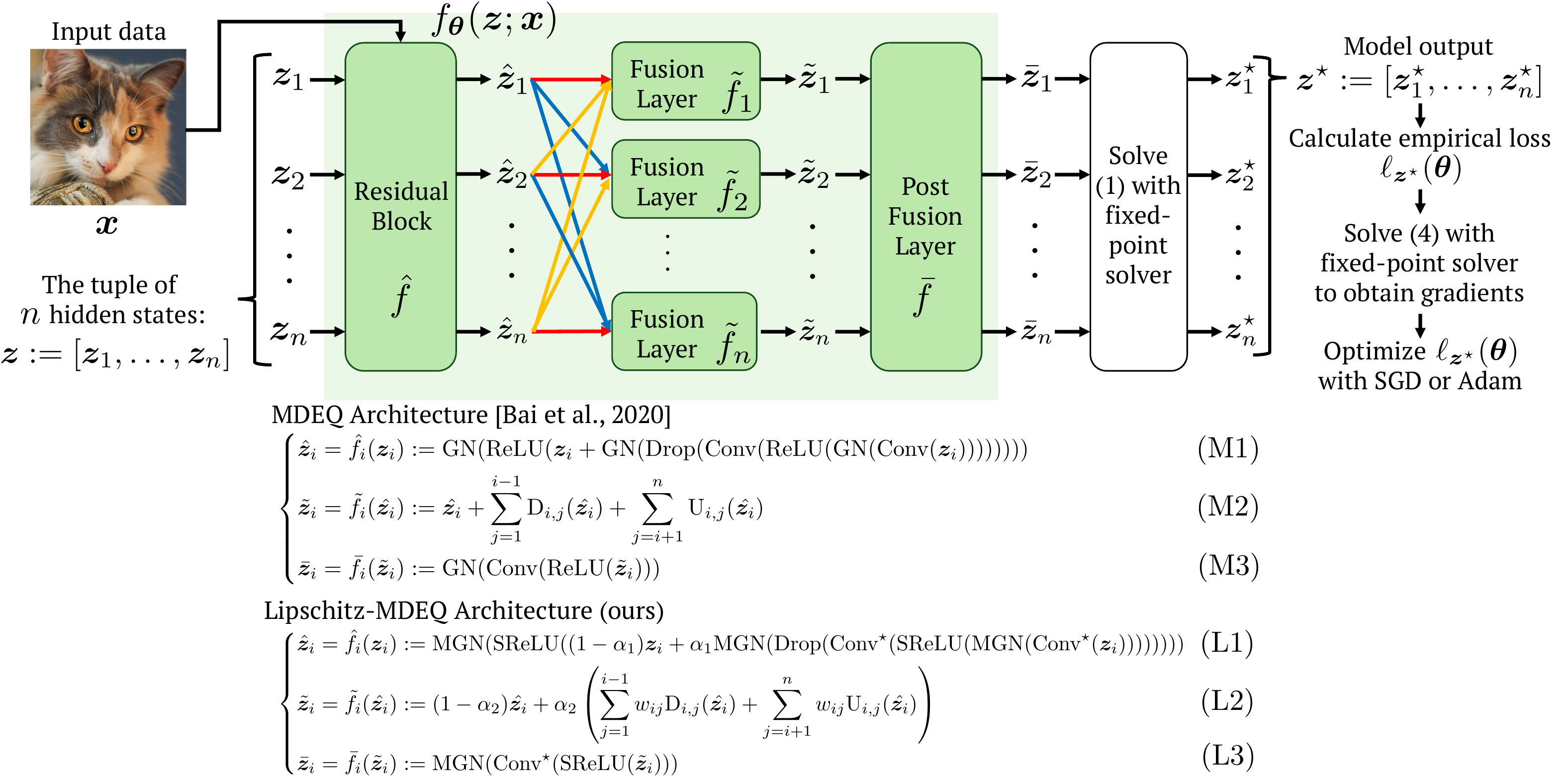}%
\end{minipage} 
\vspace*{-20pt}
\caption{Architecture of MDEQ and Lipschitz MDEQ. Lipschitz MDEQ inherits the general structure of MDEQ, with each operation modified to prevent the Lipschitz constant from becoming too large. Note that the layout and notation of the figure are taken from the prior work \cite[Figure 1]{Bai2020Mul} with only minor modifications made for clarity in explaining our results. The sample image is from the AFHQ dataset \cite{Choi2020Sta}.}
\label{fig:mdeq}
\vspace*{-8pt}
\end{figure*}

\section{Main Results}\label{sec:main}
\subsection{Preliminaries}
\vspace*{-5pt}
\paragraph{Notation} Let $\mathbb{N}$ be the set of non-negative integers. For $n \in \mathbb{N}$, define $[n] := \{ 1,2, \ldots, n\}$. Let $\mathbb{R}^d$ be a $d$-dimensional Euclidean space with inner product $\langle \cdot, \cdot \rangle$, which induces the norm $\| \cdot \|$. Let $f \circ g$ be the composite function of $f$ and $g$, i.e., $(f \circ g)(\cdot) := f(g(\cdot))$. Here, $f^k$ will denote applying function $f$ $k$ times, with $f^0$ denoting the identity mapping. We use $\odot$ to denote the element-wise product, also known as the Hadamard product.
\begin{dfn}[$L$-Lipschitz mapping]\label{dfn:lipschitz}
Given a mapping $f \colon \mathbb{R}^d \to \mathbb{R}^d$, if $\| f(\bm{x}) - f(\bm{y}) \| \leq L_f \| \bm{x} - \bm{y} \|$ holds for any $\bm{x}, \bm{y} \in \mathbb{R}^d$, then $f$ is said to be an $L_f$-Lipschitz mapping, and $L_f$ is called the Lipschitz constant of $f$.
\end{dfn}

As in Definition \ref{dfn:lipschitz}, we use $L_f$ to represent the Lipschitz constant of a mapping $f$. For specific operators, we use simplified versions such as $L_{\text{Conv}}$ and $L_{\text{ReLU}}$. The following lemma is frequently used to derive the Lipschitz constant for the forward pass of MDEQ, and is here used for the proposed Lipschitz MDEQ.

\begin{lem}\label{lem:01}
Let the mappings $f \colon \mathbb{R}^d \to \mathbb{R}^d$ and $g \colon \mathbb{R}^d \to \mathbb{R}^d$ be $L_f$-Lipschitz and $L_g$-Lipschitz mappings, respectively. Then, the mapping $f + g$ is an $(L_f + L_g)$-Lipschitz mapping. Also, the mapping $f \circ g$ is an $L_fL_g$-Lipschitz mapping.
\end{lem}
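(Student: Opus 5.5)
The plan is to verify both claims directly from Definition~\ref{dfn:lipschitz}, using only the triangle inequality for the Euclidean norm $\|\cdot\|$ and the monotonicity of multiplication by non-negative constants. No earlier result beyond the definition is needed. Fix arbitrary $\bm{x}, \bm{y} \in \mathbb{R}^d$ throughout, and exhibit the required inequality with the claimed constant.

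For the sum, I will write $(f+g)(\bm{x}) - (f+g)(\bm{y}) = (f(\bm{x}) - f(\bm{y})) + (g(\bm{x}) - g(\bm{y}))$. The triangle inequality then gives
\begin{align*}
\|(f+g)(\bm{x}) - (f+g)(\bm{y})\| \leq \|f(\bm{x}) - f(\bm{y})\| + \|g(\bm{x}) - g(\bm{y})\|.
\end{align*}
Bounding each term by its Lipschitz estimate yields $(L_f + L_g)\|\bm{x} - \bm{y}\|$, which is the first claim.

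For the composition, I will apply the Lipschitz property of $f$ at the points $g(\bm{x})$ and $g(\bm{y})$. This gives $\|f(g(\bm{x})) - f(g(\bm{y}))\| \leq L_f \|g(\bm{x}) - g(\bm{y})\|$. Next I apply the Lipschitz property of $g$, multiplying by $L_f \geq 0$, to obtain the bound $L_f L_g \|\bm{x} - \bm{y}\|$.

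There is no genuine obstacle here. The only point requiring care is that Lipschitz constants are non-negative, so multiplying the inequality for $g$ by $L_f$ preserves its direction. This is implicit in Definition~\ref{dfn:lipschitz}, since the defining inequality forces $L_f \geq 0$ whenever $d \geq 1$. Finally, I will note that induction extends both statements to finite sums and compositions, giving $L_{f^k} \le L_f^k$. This is the form in which the lemma will be used to assemble the Lipschitz constant of the Lipschitz MDEQ block.
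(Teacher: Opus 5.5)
Your proof is correct; the paper states this lemma without proof as a standard fact, and your triangle-inequality argument for $f+g$ and chained Lipschitz bounds for $f \circ g$ are exactly the standard argument it takes for granted. Your remark that $L_f \geq 0$ is needed for the multiplication step is the only point of care. The same argument also works verbatim for maps between different Euclidean spaces, which is how the paper actually uses the lemma (e.g., for the upsampling and fusion maps).
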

First, we clarify how the fixed-point map $f_{\bm{\theta}}$ appearing in MDEQ is defined. In MDEQ, hidden states are defined for each resolution, processed in parallel, and influence each other. Let the number of resolutions be $n \in \mathbb{N}$, and let $\bm{z}_1$ and $\bm{z}_n$ denote the hidden states at the highest and lowest resolutions, respectively. The input image is injected only into $\bm{z}_1$, and the other hidden states receive input only indirectly. However, note that since we are considering the Lipschitz constant of $f_{\bm{\theta}}$ with respect to the hidden state $\bm{z} := [\bm{z}_1, \ldots, \bm{z}_n]$, the processing of the input can be ignored. $f_{\bm{\theta}}$ consists of three blocks: a residual block $\hat{f} \colon \bm{z} \mapsto \hat{\bm{z}}$ (Eq. (M1)), a fusion layer $\tilde{f} \colon \hat{\bm{z}} \mapsto \tilde{\bm{z}}$ (Eq. (M2)), and a post-fusion layer $\bar{f} \colon \tilde{\bm{z}} \mapsto \bar{\bm{z}}$ (Eq. (M3)) (see Figure 1), noting that $f_{\bm{\theta}} = \bar{f} \circ \tilde{f} \circ \hat{f}$. The operations constituting these will be detailed in Section \ref{sec:3.3} along with our modifications.

\subsection{Lipschitz MDEQ Architecture}\label{sec:3.3}
We will detail each operation appearing in MDEQ and derive its Lipschitz constant. Then we will modify each operation so that the Lipschitz constant of the mapping $f_\theta$ is less than 1, thereby constructing the Lipschitz MDEQ.
\paragraph{Normalization}
MDEQ employs Group Normalization (GN) \cite{Wu2018Gro}, which normalizes features within groups of channels. For an input group of features $\bm{z}$, the standard GN operation is defined as 
\begin{align*}
\text{GN}(\bm{z})_i = \gamma \frac{z_i - \mu_{\bm{z}}}{\sqrt{\sigma^2_{\bm{z}} + \epsilon}} + \beta,
\end{align*}
where $\mu_{\bm{z}} \in \mathbb{R}$ and $\sigma^2_{\bm{z}} \in \mathbb{R}$ are the mean and variance, and $\gamma, \beta \in \mathbb{R}$ are learnable affine parameters. The Lipschitz constant of this operation is upper-bounded as $L_{\text{GN}} \leq \frac{|\gamma|}{\sqrt{\epsilon}}$ (see Lemma \ref{lem:gn}). Since $\epsilon$ is a very small constant (e.g., $10^{-5}$) to prevent division by zero, the Lipschitz constant can become excessively large in the worst-case scenario where the variance approaches zero. This potential for an explosive Lipschitz constant poses a significant challenge to the stability of the fixed-point iteration.
To mitigate this issue, our Lipschitz MDEQ adopts a simpler normalization scheme, Mean-Only Group Normalization (MGN). This operation only centers the features by subtracting the mean, omitting the division by the standard deviation:
\begin{align*}
\text{MGN}(\bm{z})_i = \gamma (z_i - \mu_{\bm{z}}) + \beta.%
\end{align*}
This seemingly small modification has a profound impact on the Lipschitz constant. The upper bound for MGN is given by $L_{\text{MGN}} \le |\gamma|$ (see Lemma \ref{lem:mgn}). By removing the problematic $1/\sqrt{\epsilon}$ term, the Lipschitz constant becomes directly dependent only on the learnable parameter $\gamma$, making it significantly smaller and far more controllable. We treat the upper bound $\bar{\gamma}$ on the absolute value of $\gamma$ as another adjustable hyperparameter, enabling precise control over the Lipschitz contribution from the normalization layer. 
In the implementation, the use of learnable parameters $\gamma$ and $\beta$ is optional. In the MDEQ implementation, the GN appearing in the residual block and fusion layer uses this option, and Lipschitz MDEQ adopts this approach.
When these are not used, $\gamma=1$ and $\beta=0$ are set and therefore $L_{\text{MGN}}$ is always 1. When using them, $L_{\text{MGN}}$ can be controlled by arbitrarily determining $\bar{\gamma}$.

\paragraph{Activation Function}
MDEQ employs the ReLU function as its activation function. The ReLU function operates on each element of the input vector $\bm{z}$ and is defined as $\text{ReLU}(\bm{z})_i := \max\{ 0, z_i\}$. The Lipschitz constant for this operation is clearly 1 (see Lemma \ref{lem:relu}).
To contribute to keeping the overall Lipschitz constant small, our Lipschitz MDEQ adopts the Scaled-ReLU (SReLU) function \cite{Ramy2020OnP}. SReLU is defined as $\text{SReLU}(\bm{z})_i := \max\{ 0, az_i \}$, where $a \in (0,1]$ is a hyperparameter and the Lipschitz constant becomes $L_{\text{SReLU}}=a$ (see Lemma \ref{lem:srelu}).

\paragraph{Dropout}
The dropout \cite{Srivastava2014Dro} is an important technique for suppressing overfitting and improving generalization performance.
MDEQ employs variational dropout \cite{Gal2016ATh} defined by $\text{Drop}(\bm{z}) := \frac{1}{1-p}\bm{m} \odot \bm{z}$, where $p \in (0,1)$ is the dropout rate and $\bm{m} \sim \text{Bernoulli}(p)$ is the mask. The Lipschitz constant for this operation is $L_{\text{Drop}} = 1/(1-p)$ (see Lemma \ref{lem:drop}). In practice, dropout rate $p$ is typically a small value ranging from 0 to 0.3, so the Lipschitz constant for the dropout operation does not become excessively large. Therefore, our Lipschitz MDEQ also adopts variational dropout.

\paragraph{Convolution}
A standard convolution layer performs an affine transformation on the input feature map $\bm{z}$. Conceptually, this operation can be expressed as $\text{Conv}(\bm{z}) = W\bm{z} + \bm{b}$, where $W$ is a linear operator derived from the convolutional kernel and $\bm{b}$ is a bias term. 
The Lipschitz constant for this operation is $L_{\text{Conv}} = \| W \|_2$ (see Lemma \ref{lem:conv}).
In our Lipschitz MDEQ, we directly control this spectral norm to regulate the Lipschitz constant of the convolutional layers. Specifically, we enforce a constraint $\| W \|_2 \leq c$ by reprojecting the weights after each optimization step, where the upper bound $c > 0$ is a tunable hyperparameter we refer to as the target norm. We denote constrained $\text{Conv}$ as $\text{Conv}^\star$, and therefore $L_{\text{Conv}^\star} = c$.

\paragraph{Residual Connection}
In the residual block of MDEQ, a residual connection is observed where the inputs are simply added together, such as $h(x) = x + g(x)$. This is an extremely effective mechanism introduced in \cite{He2016Dee}. However, since the Lipschitz constant of the identity mapping is 1, $L_h = 1 + L_g$ according to Lemma \ref{lem:01}, contributing to an overall increase in the Lipschitz constant. The simplest approach to mitigate this is to use a convex combination for $\alpha \in (0,1)$, such as $h'(x) = (1-\alpha)x + \alpha g(x)$. Then, we have $L_{h'} = (1-\alpha) + \alpha L_g$, achieving a smaller Lipschitz constant than the simple residual connection. Therefore, in Lipschitz MDEQ, all residual connections are made convex combinations, and the coefficient parameters are treated as hyperparameters. The convex combination parameter in the residual block is denoted by $\alpha_1$ (Eq. (L1)).
It should be noted that this convex combinations structure has been used in highway networks \cite{Srivastava2015Hig, Srivastava2015Tra} and has been widely adopted as the fundamental form of gate mechanisms in gated recurrent units (GRUs) \cite{Chung2014Emp} and recent gated linear RNNs \cite{Qin2023Hie, Feng2024Wer, Gu2024Mam}.

\paragraph{Downsample}
In the fusion layer of MDEQ, when passing states from high resolution to low resolution, a convolution operation is applied that is proportional to the difference in resolution (the blue arrow in Figure \ref{fig:mdeq}). Specifically, if $h_1(\bm{z}) := \text{GN}(\text{CONV}(\bm{z}))$ and $h_2(\bm{z}) := \text{ReLU}(\text{GN}(\text{CONV}(\bm{z})))$, the operation $\text{D}_{i,j}$ is defined as $\text{D}_{i,j} := h_1 \circ h_2^{i-j-1}$. Therefore, the Lipschitz constant for this operation is $L_{\text{Down}}^{i,j} = L_{h_1}L_{h_2}^{i-j-1}$, which depends on the Lipschitz constants of the activation function, normalization, and convolution operations. Our Lipschitz MDEQ implements the MDEQ downsampling operation without altering its form. Therefore, in Lipschitz MDEQ, the Lipschitz constant for this operation is $L_{\text{Down}}^{i,j} = L_{\text{MGN}}L_{\text{Conv}^\star}(L_{\text{SReLU}}L_{\text{MGN}}L_{\text{Conv}^\star})^{i-j-1}$.

\paragraph{Upsample}
In the fusion layer of MDEQ, when passing states from low resolution to high resolution, missing dimensions are filled using nearest-neighbor interpolation (the yellow arrow in Figure \ref{fig:mdeq}). This operation $\text{U}_{i,j}$ takes a low-resolution input $\bm{z}_j \in \mathbb{R}^{H \times W}$ and outputs high-resolution $\bm{z}_{j \to i} \in \mathbb{R}^{sH \times tW}$, where $s, t \in \mathbb{N}$ are the vertical and horizontal scale factors, respectively. In MDEQ, both scale factors are defined as $2^{\text{level\_diff}}$, where $\text{level\_diff}$ is the difference between the indices of the hidden states; for example, $\text{level\_diff}$ between $\bm{z}_1$ and $\bm{z}_3$ is 2. The Lipschitz constant for this operation is $L_{\text{up}}=\sqrt{st}$, i.e., $2^{\text{level\_diff}}$ (see Lemma \ref{lem:up}). Practically, the number of resolutions $n$ is chosen as either 2 or 4, so the Lipschitz constant reaches a maximum of $2^3=8$, which is a factor causing the overall Lipschitz constant to explode. Lipschitz MDEQ inherits this operation as well.

\paragraph{Multiscale Fusion}
In the fusion layer of MDEQ, $\tilde{\bm{z}}_i$ is the sum of itself $\hat{\bm{z}}_i$ and all $\hat{\bm{z}}_j$ $(i \neq j)$ that have undergone downsample and upsample (see Eq. (M2)). However, since the Lipschitz constant for the upsampling operation in particular can become large depending on $\text{level}\_\text{diff}$, taking the sum may cause the Lipschitz constant to explode in the fusion layer.
Therefore, Lipschitz MDEQ first performs a convex combination between itself $\hat{\bm{z}}_i$ and the other upsampled or downsampled resolutions $\hat{\bm{z}}_j$ ($i\neq j$). Then, for upsampling and downsampling, it takes a weighted average, applying smaller weights as $\text{level}\_\text{diff}$ increases (see also Eq. (L2)), i.e.,
\begin{align*}
    \tilde{\bm{z}}_i = (1-\alpha_2)\hat{\bm{z}}_i + \alpha_2 \left( \sum_{j \neq i} w_{ij} \text{Fuse}_{i,j}(\hat{\bm{z}_i}) \right),
\end{align*}
where $\alpha_2 \in (0,1)$, $\text{Fuse}_{i,j}$ is $\text{D}_{i,j}$ when $j < i$ and $\text{U}_{i,j}$ when $j > i$, and the weights $w_{ij}$ are computed dynamically via a softmax function over penalties proportional to the $\text{level}\_\text{diff}$ of the upsampling paths:
\begin{align*}
    w_{ij} = \frac{\exp(-p_{ij})}{\sum_{k \neq i} \exp(-p_{ik})}, \text{where } p_{ij} = \begin{cases} j-i & \text{if } j > i \\ 0 & \text{if } j < i  \end{cases}.
\end{align*}
Crucially, the weights $w_{ij}$ are designed to penalize contributions from distant branches.
This mechanism ensures that upsampling operations with large $\text{level}\_\text{diff}$, and thus large Lipschitz constants, are assigned exponentially smaller weights. By adaptively down-weighting these potentially explosive terms, our fusion layer effectively suppresses the Lipschitz constant, contributing significantly to the overall stability.

\subsection{Lipschitz Constant of Lipschitz MDEQ}\label{sec:3.4}
The hidden state of MDEQ consists of a tuple of $n$ feature maps, $\bm{z} = [\bm{z}_1, \dots, \bm{z}_n]$, where each $\bm{z}_i$ can be viewed as a vector in a $d_i$-dimensional Euclidean space, $\mathbb{R}^{d_i}$. The full hidden state $\bm{z}$ therefore resides in the product space $\mathcal{Z} = \mathbb{R}^{d_1} \times \dots \times \mathbb{R}^{d_n}$.
The mapping $f_{\bm{\theta}}$ takes a tuple of hidden states $\bm{z} \in \mathcal{Z}$ and outputs $\bar{\bm{z}} = f_{\bm{\theta}}(\bm{z}) \in \mathcal{Z}$. We analyze the Lipschitz constant of $f_{\bm{\theta}}$ with respect to the norm on $\mathcal{Z}$ defined by
\begin{align*}
\| \bm{z} \|_{\mathcal{Z}} := \sqrt{\sum_{i=1}^n\| \bm{z}_i \|_2^2},
\end{align*}
where $\| \cdot \|_2$ denotes the standard Euclidean norm. This norm is equivalent to the $L_2$-norm of the vector formed by concatenating all hidden states $\bm{z}_i \in \mathbb{R}^{d_i}$, and naturally satisfies the axioms of a norm. 

\paragraph{Residual Block and Post-Fusion Layer} In the residual block, the branch processing function $\hat{f} \colon \mathcal{Z} \to \mathcal{Z}$ acts on each resolution branch independently. We can thus decompose it into $n$ component functions $\hat{f}_i: \mathbb{R}^{d_i} \to \mathbb{R}^{d_i}$, such that for an input $\bm{z} = [\bm{z}_1, \dots, \bm{z}_n]$, the output is simply $\hat{f}(\bm{z}) = [\hat{f}_1(\bm{z}_1), \ldots, \hat{f}_n(\bm{z}_n)]$. Note that although these functions differ in their input and output dimensions, the modules constituting them are identical. That is, $\hat{f}_i$ possesses the same Lipschitz constant for any $i \in [n]$, which we denote as $\hat{L}$. Then, we have
\begin{align*}
\| \hat{f}(\bm{z}) - \hat{f}(\bm{z}') \|_\mathcal{Z}^2
&= \sum_{i \in [n]} \| \hat{f}_i(\bm{z}_i) - \hat{f}_i(\bm{z}'_i) \|_2^2 \\
&\leq \sum_{i \in [n]} \hat{L}^2 \| \bm{z}_i - \bm{z}'_i \|_2^2 \\
&= \hat{L}^2 \| \bm{z} - \bm{z}' \|_\mathcal{Z}^2.
\end{align*}
Therefore, $\hat{f}$ also has the same Lipschitz constant, $\hat{L}$. 
From the definition of $\hat{f}_i$ (see Eq. (L1)) and Lemma \ref{lem:01}, it can be shown that 
\begin{align}\label{eq:hat}
\hat{L} &= (1-\alpha_1)L_{\text{MGN}}L_{\text{SReLU}} \nonumber \\
&\quad+\alpha_1 L_{\text{MGN}}^3L_{\text{Conv}^\star}^2 L_{\text{SReLU}}^2L_{\text{Drop}}.
\end{align}
Similarly, since the processing for each branch in the post-fusion layer $\bar{f}$ is completely independent, let the processing for each branch be denoted by $\bar{f}_i$ and the Lipschitz constant for these be $\bar{L}$. Then, the Lipschitz constant for $\bar{f}$ is also $\bar{L}$. From the definition of $\bar{f}_i$ (see Eq. (L3)), we have
\begin{align}\label{eq:bar}
\bar{L} = L_{\text{MGN}}L_{\text{Conv}^\star}L_{\text{SReLU}}.
\end{align}

\begin{figure*}[htbp]
\begin{minipage}[t]{1\linewidth}
\centering%
\includegraphics[width=1\linewidth]{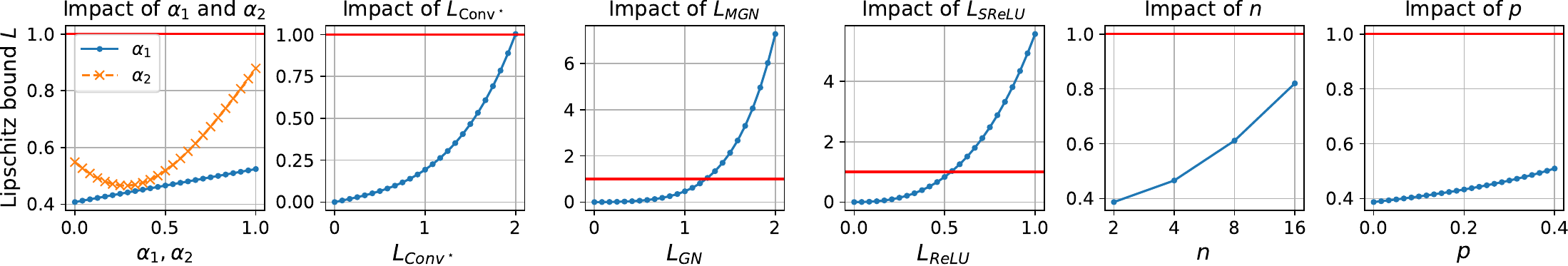}%
\end{minipage}%
\vspace*{-8pt}
\caption{Effect of hyperparameters constituting $f_{\bm{\theta}}$ on Lipschitz bound $L$. Each graph is plotted by varying only one parameter while holding all others constant. Fixed parameters are set as $\alpha_1=0.5, \alpha_2=0.3, L_{\text{Conv}^\star}=1.5, L_{\text{MGN}}=1.0, L_{\text{SReLU}}=0.4, n=4$, and $p=0.3$. Note that changing the values of the fixed parameters will also change the plot.}
\label{fig:sensi}
\vspace*{-10pt}
\end{figure*}

\paragraph{Fusion Layer}
In the fusion layer, the processing of each branch is not independent, as the hidden states influence each other.
Therefore, we define the function $\tilde{f}_i \colon \mathcal{Z} \to \mathbb{R}^{d_i}$ to take a tuple of $n$ hidden states $\hat{\bm{z}} \in \mathcal{Z}$ as input and output a single hidden state $\tilde{\bm{z}}_i \in \mathbb{R}^{d_i}$, i.e., $\tilde{f}(\hat{\bm{z}}) = [\tilde{f}_1(\hat{\bm{z}}), \ldots, \tilde{f}_n(\hat{\bm{z}})]$. Suppose $\tilde{f}_i$ is $\tilde{L}_i$-Lipschitz $(i \in [n])$, i.e., $\| \tilde{f}_i(\hat{\bm{z}}) - \tilde{f}_i(\hat{\bm{z}}') \|_2 \leq \tilde{L}_i \| \hat{\bm{z}} - \hat{\bm{z}}' \|_\mathcal{Z}$. Then, we have
\begin{align*}
\| \tilde{f}(\hat{\bm{z}}) - \tilde{f}(\hat{\bm{z}}') \|_\mathcal{Z}^2
&= \sum_{i \in [n]} \| \tilde{f}_i(\hat{\bm{z}}) - \tilde{f}_i(\hat{\bm{z}}') \|_2^2 \\
&\leq \sum_{i \in [n]} \tilde{L}_i^2 \| \hat{\bm{z}} - \hat{\bm{z}}' \|_\mathcal{Z}^2.
\end{align*}
Therefore, $\tilde{f}$ is $\sqrt{\sum_{i \in [n]} \tilde{L}_i^2}$-Lipschitz. From the definition of $\tilde{f}_i$ (see Eq. (L2)), we have
\begin{align}\label{eq:tilde}
\tilde{L}_i = \sqrt{(1-\alpha_2)^2 + \alpha_2^2 \sum_{j \neq i} (w_{ij}L_{\text{fuse}}^{i,j})^2}.
\end{align} 

Since $f_{\bm{\theta}}$ is the composite function of these three blocks, its Lipschitz constant is determined by their product and is equal to 
\begin{align}\label{eq:overall}
L=\hat{L}\sqrt{\sum_{i \in [n]} \tilde{L}_i^2}\bar{L},%
\end{align}
where $\hat{L}, \bar{L}, \tilde{L}_i$ are defined as in \eqref{eq:hat}, \eqref{eq:bar}, \eqref{eq:tilde}, respectively.

\paragraph{Sensitivity of Each Hyperparameter}
The Lipschitz constant for the forward pass is determined by the Lipschitz constants of each operation constituting $f_{\bm{\theta}}$, as given by equation \eqref{eq:overall}, and the Lipschitz constant of each operation is a hyperparameter. Specifically, it is determined by the two convex combination parameters $\alpha_1$ and $\alpha_2$, the target norm $L_{\text{Conv}^\star} = c$ in the convolution operation, the affine parameter $L_{\text{MGN}} = \bar{\gamma}$ in the normalization operation, the slope $L_{\text{SReLU}} = a$ of the activation function, the number of resolution branches $n$, and the dropout rate $p$. Figure \ref{fig:sensi} plots how much each parameter affects the Lipschitz constant $L$ for the entire forward pass. The magnitude of the impact depends on the order of the Lipschitz constant for each operation appearing within $L$, and it is evident that $L$ is particularly strongly influenced by $L_{\text{SReLU}}, L_{\text{MGN}}$, and $L_{\text{Conv}^\star}$. Furthermore, especially when these values are large, the small convex combination parameters $\alpha_1, \alpha_2$ play a crucial role in keeping $L$ small. For example, when $L_{\text{SReLU}}=0.35, L_{\text{Conv}^\star}=2.0, L_{\text{MGN}}=1.0, \alpha_1=0.5, \alpha_2=0.3, n=4$, and $p=0.3$, the Lipschitz constant for the forward pass is $L = 0.86$, satisfying the contraction property.

\subsection{Side Effects in Backward Pass}
As described already, we restructured the model architecture so that the fixed-point mapping $f_{\bm{\theta}}$ appearing in the forward pass (\ref{eq:forward}) becomes a contraction mapping; i.e., we ensured that the Lipschitz constant $L = \sup_{\bm{z} \in \mathcal{Z}} \| J_{f_{\bm{\theta}}}(\bm{z}) \|_2$ is less than 1, where $J_{f_{\bm{\theta}}}$ is the Jacobian matrix of $f_{\bm{\theta}}$.
This guarantees fixed-point convergence in the forward pass and is also expected to accelerate the forward pass.
Meanwhile, beneficial effects are also observed in the backward pass. The Lipschitz constant for the fixed-point mapping appearing in the backward pass (\ref{eq:back-fix}) is $\| J_{f_{\bm{\theta}}}(\bm{z^\star}) \|_2$ (see Lemma \ref{lem:back}), where $\bm{z}^\star \in \mathcal{Z}$ is the fixed point obtained in the forward pass. Since $\| J_{f_{\bm{\theta}}}(\bm{z^\star}) \|_2 \leq \sup_{\bm{z} \in \mathcal{Z}} \| J_{f_{\bm{\theta}}}(\bm{z}) \|_2$, the Lipschitz constant for the backward pass is always smaller than that in the forward pass. Therefore, forcing the forward pass Lipschitz constant to be less than 1 also means suppressing the backward pass Lipschitz constant to be less than 1. This guarantees fixed-point convergence for the backward pass as well, and acceleration of the backward pass is also anticipated. 

\section{Numerical Experiments}\label{sec:exp}
\begin{table*}[ht]
    \centering
    \caption{Comparison of different acceleration techniques for MDEQ on CIFAR-10. We report top-1 accuracy, total training speed, and average number of fixed-point iterations (NFEs) and runtime (Time) for the forward and backward passes. Best results in each column are highlighted in \textbf{bold}. JR: Jacobian regularization \cite{Bai2021Sta}; PG: phantom gradient \cite{Geng2021OnT}; and JFB: Jacobian-free backpropagation \cite{Fung2022JFB}. $^\dagger$Only JR has the maximum NFEs set to 7 and 8.}
    \label{tab:speedup}
    \resizebox{\textwidth}{!}{%
    \begin{tabular}{lcccccccccc}
        \toprule
        \multirow{2}{*}{\textbf{Model and method}} & \multirow{2}{*}{\textbf{Size}} & \multirow{2}{*}{\textbf{Speed-up}} & \multirow{2}{*}{\textbf{Acc.}} & \multicolumn{2}{c}{\textbf{Train forward}} & \multicolumn{2}{c}{\textbf{Train backward}} & \multicolumn{2}{c}{\textbf{Test forward}} \\
        \cmidrule(lr){5-6} \cmidrule(lr){7-8} \cmidrule(lr){9-10}
        & & & & NFEs & Time (ms) & NFEs & Time (ms) & NFEs & Time (ms) \\
        \midrule
        MDEQ & 10M &1.0$\times$ & 93.23 & 17.9 & 217 & 20 & 291 & 18.0 & 63.9 \\
        MDEQ + JR &10M &2.22$\times$ & 91.86 &7.0$^\dagger$ &74.8 &8.0$^\dagger$ &97.8 &7.0$^\dagger$ &39.2 \\
        MDEQ + PG &10M & 1.41$\times$ & 92.92 & 15.6 & 220 & -- & 87.6 & 18.0 & 63.9 \\
        MDEQ + JFB &10M & 2.84$\times$ & 88.13 & 12.3 & 142 & -- & \textbf{30.6} & 11.3 & 44.3 \\
        Lipschitz MDEQ ($L=14.43$) &10M& 1.28$\times$ & \textbf{93.73} & 16.0 & 187 & 19.9 & 193 & 15.5 & 54.1 \\
        Lipschitz MDEQ ($L=1.0$) &10M & 3.33$\times$ & 91.44 & 5.4 & 75.7 & 2.9 & 52.9 & 4.8 & 22.6 \\
        Lipschitz MDEQ ($L=0.03$) &10M & \textbf{4.75$\times$} & 90.47 & \textbf{2.0} & \textbf{44.0} & \textbf{2.0} & 42.1 & \textbf{2.0} & \textbf{13.3}  \\
        \midrule
        Lipschitz MDEQ ($L=14.43$) + JFB &10M & 3.66$\times$ & 88.66 & 7.6 & 90.18 & - & 24.63 & 7.11 & 26.51  \\
        Lipschitz MDEQ ($L=1.0$) + JFB &10M & 4.98$\times$ & 87.53 & 4.2 & 55.34& - & 24.64 & 4.0 & 17.57  \\
        Lipschitz MDEQ ($L=0.03$) + JFB &10M & 6.43$\times$ & 87.26 & 2.0 & 34.98 & - & 24.67 & 2.0 & 11.3  \\
        \bottomrule
    \end{tabular}}
    \vspace*{-10pt}
\end{table*}

For details on the experimental environment and hyperparameters, see Appendix \ref{sec:appexp}. In all experiments, we control the Lipschitz constant $L$ by fixing $L_{\text{MGN}}=1.0$, $L_{\text{Conv}^\star}=2.0$, $\alpha_1 = 0.5$, $\alpha_2=0.3$, $n=4$, and $p=0.3$ and varying the $L_{\text{SReLU}}=\{ 0.1, 0.2, 0.3, 0.4, 0.5, 0.6, 0.7, 0.8, 0.9, 1.0\}$. Specifically, $L=\{0.03, 1.0, 14.43\}$ are obtained when $L_{\text{SReLU}}=\{0.1, 0.4, 1.0\}$, respectively.

\paragraph{Comparison of Fixed-point Convergence}
In Lipschitz MDEQ, forcing the Lipschitz constant in the forward pass to be less than 1 guarantees the existence of a unique fixed point. Figure \ref{fig:convergence} plots fixed-point convergence for CIFAR-10 classification. Number of fixed-point iterations is fixed at 18 for the forward pass and 20 for the backward pass. The evaluation metric is relative residual $\| f_{\bm{\theta}}(\bm{z}; \bm{x}) - \bm{z} \|/\| f_{\bm{\theta}}(\bm{z};\bm{x})\|$. Throughout training, Lipschitz MDEQ achieves overwhelming fixed-point convergence, demonstrating better convergence than existing methods even in cases where convergence is not theoretically guaranteed (i.e., $L\geq1$). We obtained similar results for the forward pass during testing (see Appendix \ref{sec:exp_convergence}). 

\begin{figure}[htbp]
\begin{minipage}[t]{1\linewidth}
\centering%
\includegraphics[width=1\linewidth]{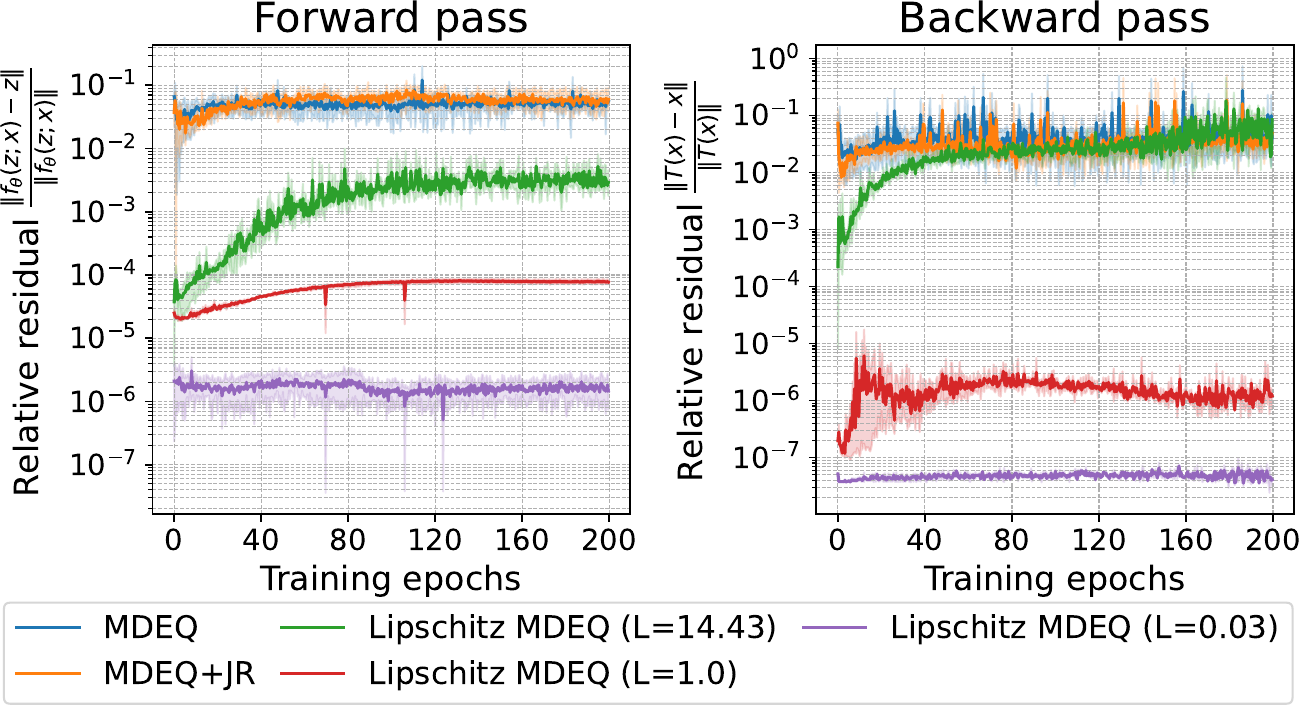}%
\end{minipage}%
\vspace*{-10pt}
\caption{Comparison of fixed-point convergence in forward and backward passes of training.}
\label{fig:convergence}
\vspace*{-10pt}
\end{figure}

\paragraph{Acceleration in Image Classification Tasks}
Table \ref{tab:speedup} summarizes the results of MDEQ and several acceleration methods for CIFAR-10 classification, along with our Lipschitz MDEQ.
All fixed-point iteration evaluation metrics used residual, with a threshold of 1e-3. The maximum iteration was set to 18 for the forward pass and 20 for the backward pass.
In both the forward pass during training and testing and the backward pass during training, Lipschitz MDEQ achieves significantly faster fixed-point convergence than existing methods, reducing runtime. However, it was observed that the stricter the contractive constraints, the more test accuracy deteriorated.
Furthermore, in theory, the smaller the Lipschitz constant, the better JFB works. Therefore, we also report the results of combining JFB with the Lipschitz MDEQ in Table \ref{tab:speedup}. This combination unlocks further speed improvements, but compared to Lipschitz MDEQ without JFB, the deterioration in accuracy is even more pronounced. 

\paragraph{Trade-off between Accuracy and Runtime}
Figure \ref{fig:trade_off} plots accuracy and speed as the Lipschitz constant is varied. The results shown demonstrate that controlling the Lipschitz constant allows the trade-off between accuracy and speed to be managed.
Furthermore, even when theoretical guarantees cannot be obtained ($L\geq1$), sufficient acceleration equivalent to or exceeding existing methods was achieved. In addition, by setting the Lipschitz constant to a large value without theoretical guarantees, our Lipschitz MDEQ achieves test accuracy equivalent to or exceeding that of MDEQ, along with a slight increase in speed.

\begin{figure}[htbp]
\begin{minipage}[t]{1\linewidth}
\centering%
\includegraphics[width=1\linewidth]{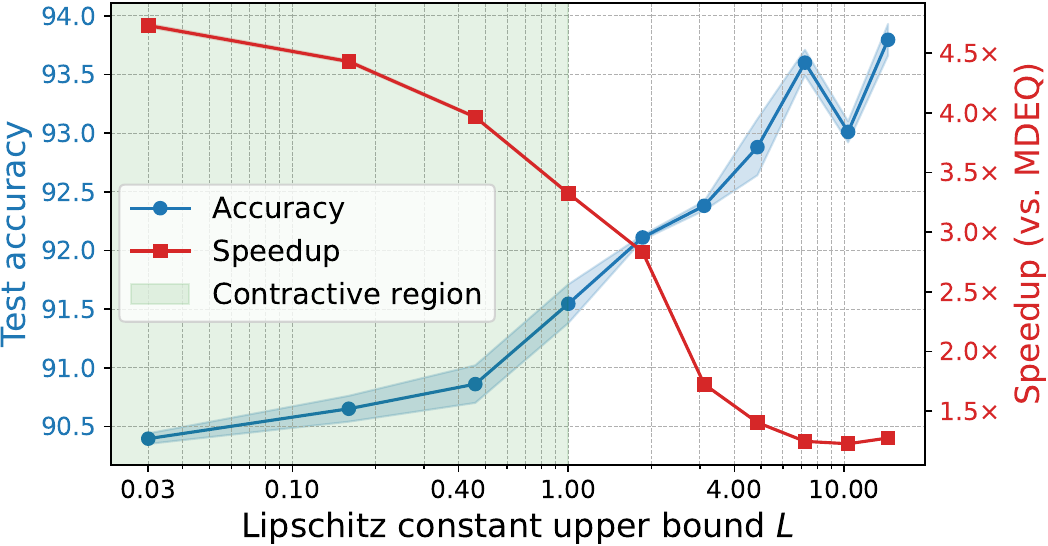}%
\end{minipage}%
\vspace*{-10pt}
\caption{Accuracy and speed trade-off in Lipschitz MDEQ. The region where convergence is guaranteed, $L < 1$, is shaded green.}
\label{fig:trade_off}
\vspace*{-10pt}
\end{figure}


\begin{table*}[htbp]
    \centering
    \caption{\UPDATE{Ablation study for Lipschitz MDEQ on CIFAR-10. We report top-1 accuracy, total training speed, and average number of fixed-point iterations (NFEs) and runtime (Time) for the forward and backward passes. Best results in each column are highlighted in \textbf{bold}.}}
    \label{tab:ablation}
    \resizebox{\textwidth}{!}{%
    \begin{tabular}{lcccccccccc}
        \toprule
        \multirow{2}{*}{\textbf{Model and method}} & \multirow{2}{*}{\textbf{Size}} & \multirow{2}{*}{\textbf{Speed-up}} & \multirow{2}{*}{\textbf{Acc.}} & \multicolumn{2}{c}{\textbf{Train forward}} & \multicolumn{2}{c}{\textbf{Train backward}} & \multicolumn{2}{c}{\textbf{Test forward}} \\
        \cmidrule(lr){5-6} \cmidrule(lr){7-8} \cmidrule(lr){9-10}
        & & & & NFEs & Time (ms) & NFEs & Time (ms) & NFEs & Time (ms) \\
        \midrule
        MDEQ & 10M &1.0$\times$ & 93.23 & 17.9 & 217 & 20 & 291 & 18.0 & 63.9 \\
        Lipschitz MDEQ ($L=1.0$) &10M & 3.33$\times$ & 91.44 & 5.4 & 75.7 & 2.9 & 52.9 & 4.8 & 22.6 \\
        (S1) Lipschitz MDEQ - MGN clip &10M& 2.81$\times$ & 91.99 & 6.4 & 90.5 & 4.7 & 65.3 & 5.7 & 25.2 \\
        (S2) Lipschitz MDEQ - MGN &10M & 0.95$\times$ & 92.78 & 18.0 & 225 & 20.0 & 296 & 18.0 & 72.9  \\
        (S3) Lipschitz MDEQ - Conv$^\star$ &10M & \textbf{3.42$\times$} & 91.94 & \textbf{3.8} & \textbf{61.4} & 5.6 & 73.0 & \textbf{3.7} & \textbf{19.0} \\
        (S4) Lipschitz MDEQ - softmax &10M & 3.41$\times$ & 91.29 & 5.4 & 76.5 & \textbf{2.8} & \textbf{48.2} & 4.9 & 20.6  \\
        (S5) Lipschitz MDEQ - $\alpha_1$ &10M & 2.72$\times$ & 92.35 & 6.6 & 90.9 & 5.5 & 71.9 & 6.0 & 25.9  \\
        (S6) Lipschitz MDEQ - $\alpha_2$ &10M & 1.64$\times$ & 92.73 & 9.7 & 120 & 18.9 & 179 & 9.8 & 33.2  \\
        (S7) Lipschitz MDEQ - $\alpha_1, \alpha_2$ &10M & 1.37$\times$ & \textbf{93.35} & 14.5 &169 & 19.6 & 184 & 14.5 & 45.4  \\
        \bottomrule
    \end{tabular}}
    \vspace*{-10pt}
\end{table*}

\UPDATE{
\subsection{Ablation Study and Discussion}\label{sec:ablation}
By systematically removing each modification applied to the individual operations that constitute Lipschitz MDEQ, we aim to verify the contribution and importance of each technique toward achieving both accuracy and speed. Table \ref{tab:ablation} summarizes the experimental results obtained under settings where only the modification of a single operation was undone. The default parameters inherit the values of Lipschitz MDEQ with $L=1.0$. 
The settings are as follows.
(S1) We removed the constraint $\bar{\gamma} \leq 1$ on the affine parameter of the normalization operation MGN.
(S2) We changed the normalization operation from MGN to GN.
(S3) We changed the convolution operation from Conv$^\star$ to general Conv.
(S4) We removed the weighted averaging mechanism via the softmax function in the fusion layer and replaced it with simple summation.
(S5) We removed the convex combination in the residual block and replaced it with pure residual connection.
(S6) We removed the convex combination in the fusion layer and replaced it with pure residual connection.
(S7) We removed the convex combinations in both the residual block and fusion layer and replaced both combinations with pure residual connections.

Table \ref{tab:ablation} shows that the most impactful modification in the transition from MDEQ to Lipschitz MDEQ was changing the normalization operation from GN to MGN (see (S2)). In (S2), despite techniques other than normalization still attempting to maintain a small overall Lipschitz constant, fixed-point convergence deteriorates significantly, resulting in the longest runtime. 
This result indicates that our modification is functioning as intended, and therefore we recommend using MGN rather than GN from the perspective of the Lipschitz constant of the normalization operation.

Furthermore, the results in (S5)--(S7) demonstrate that simple residual connections lead to worse fixed-point convergence compared to convex combinations.
Similarly, while this result validates our theory, note that using convex combinations results in a deterioration of accuracy.
There may be room for improvement in achieving high test accuracy and speed-up through superior fixed-point convergence.

In addition, the results of (S3) and (S4) could not be predicted by our theoretical analysis. First, the results of (S4) show that while taking a weighted average using the softmax function in the fusion layer undoubtedly contributes to theoretically suppressing the overall Lipschitz constant, the experimental benefits were limited. Next, (S3) demonstrates that imposing no constraints on the Lipschitz constant of the convolution operation does not worsen fixed-point convergence; rather, it partially improves it.
Moreover, since parameter norm constraints naturally reduce a model's expressive power, (S3) achieves higher test accuracy than the full-specification Lipschitz MDEQ (second row in the table). Therefore, we must conclude that the hyperparameter setting for Lipschitz MDEQ that removes only the constraint on the convolution operation (S3) is practical in terms of both test accuracy and runtime. Of course, it is important to note that the acceleration of fixed-point convergence in (S3) is not fully guaranteed theoretically.
Thus, Lipschitz MDEQ still exhibits several gaps between theory and experimental observations, pointing to one direction for future work.

Finally, to accelerate DEQ based on our approach, theoretically any modification is acceptable as long as the Lipschitz constant can be kept small; our model modification is merely one example.
While this study has demonstrated the effectiveness of this approach, important future work will also involve exploring model modifications that suppress the Lipschitz constant without compromising test accuracy, as well as modifications effective for other explicit models.
}

\section{Conclusion}
This paper proposes Lipschitz MDEQ, a modified architecture of MDEQ. The proposed model allows users to determine the Lipschitz constant of the fixed-point mapping in the forward pass, which can thereby be made into a contracting mapping, guaranteeing fixed-point convergence. This dramatically improves fixed-point convergence in the forward pass.
Furthermore, since the Lipschitz constant of the fixed-point mapping in the backward pass is smaller than that in the forward pass, the acceleration of fixed-point convergence extends to the backward pass as well. Consequently, the proposed architecture can dramatically reduce the time required for training and inference compared to existing models and acceleration techniques, albeit at the cost of slightly lower accuracy.
We are confident that our results represent a significant step toward developing high-performance, fast models with superior memory efficiency, addressing the critical drawback of the low learning and inference speeds in DEQ.

\UPDATE{
\section*{Acknowledgements}
We are sincerely grateful to Program Chairs, Area Chairs, and three anonymous reviewers for helping us improve the original manuscript. 
We would like to thank Junya Morioka for his technical support in the numerical experiments.
This work was supported by the Japan Society for the Promotion of Science (JSPS) KAKENHI Grant Number 24K14846 awarded to Hideaki Iiduka.
}

\bibliography{main}

\clearpage
\appendix
\thispagestyle{empty}

\onecolumn
\aistatstitle{Lipschitz Multiscale Deep Equilibrium models:\\ A Theoretically Guaranteed and Accelerated Approach \\
Supplementary Materials}


\section{Other Related Works}\label{sec:other}
\paragraph{\UPDATE{Other Theory}}
Other research on theoretical aspects of DEQs includes work related to neural tangent kernel theory \cite{Feng2023Ont, Agarwala2022Dee, Gao2022Agl, Ling2023Glo, Truong2025Glo, Ling2024Dee}. Additionally, Kawaguchi proved that under certain assumptions, gradient descent converges to a global optimum for minimizing empirical loss in linear DEQs \cite{Kawaguchi2021Ont}.
\UPDATE{Also, the gradient descent dynamics of DEQ has been studied in the simple settings of linear and single-index models \cite{Dandapanthula2025Gra}.
Gao et al. explored the infinite-width behavior of deep equilibrium neural networks, and found a limiting Gaussian process behavior that occurs even when depth and width limits are interchanged \cite{Gao2023Wide}.}

\paragraph{Application}
The defining feature of a DEQ is its ability to achieve both high performance and low memory consumption, making it applicable to a remarkably diverse range of tasks. Particularly notable is its application to inverse problems such as wireless channel estimation \cite{Yuan2024APe, Tian2024GSU, Tian2025Uns}, MRI and general imaging reconstruction \cite{Gilton2021Dee, Liu2022Onl, Pramanik2022Imp, Zou2023Dee, Park2025Eff, Kuo2025Mat}, CT reconstruction \cite{Bubba2025Tom}, \UPDATE{self-supervised imaging reconstruction \cite{Mehta2025Equ}}, MPI reconstruction \cite{Alper2024DEQ}, video snapshot compressive imaging \cite{Zhao2023Dee}, Poisson inverse problems \cite{Daniele2025Dee}, Plug-and-Play image reconstruction \cite{Chandler2023Ove}, medical image registration \cite{HU2024APl, Zhang2025Bri}, imaging photoplethysmography (iPPG) \cite{Shenoy2025Rec}, inverse problems under model mismatch \cite{Shoushtari2022Dee, Hu2023Rob, Guan2024Sol}, and music source separation \cite{Koyama2022Mus}. 
Other applications include diffusion models \cite{Pokle2022Dee, Geng2023One, Bai2024Fix}, image restoration \cite{Chen2023Equ, Jiezhang2024Dee}, object detection \cite{Wang2020Imp, Wang2023Dee}, landmark detection \cite{Micaelli2023Rec}, optical flow estimation \cite{Bai2022Dee}, point cloud classification and point cloud completion \cite{Geuter2025DDE}, learning solution operators for steady-state PDEs \cite{Marwah2023Dee}, parameterized quantum circuits \cite{Schleich2024Qua}, quantum Hamiltonian prediction \cite{Wang2024Inf}, \UPDATE{molecular dynamics simulations \cite{Burger2025DEQ},} initial value problems of ordinary differential equations \cite{Pacheco2024Sol}, image compressive sensing \cite{Yu2024MsD}, energy minimization in Hopfield networks \cite{Goemaere2024Acc}, multimodal fusion \cite{Jinhong2023Dee}, federated learning \cite{Gkillas2023Dee}, and video semantic segmentation and video object detection \cite{Ertenl2022Str}. Geng et al.\ have provided a convenient PyTorch implementation of DEQ \cite{Geng2023Tor}.

\paragraph{Adversarial Robustness}
Many previous studies focused on the adversarial robustness of DEQ. Chen et al.\ examined the robustness of monDEQ using its semi-algebraic representation \cite{Chen2021Sem}. Pabbaraju et al.\ estimated the Lipschitz constant of monDEQ and demonstrated its robustness against adversarial attacks \cite{Pabbaraju2021Est}. Jafarpour et al.\ studied the stability of fixed-point convergence in a DEQ theoretically using non-Euclidean contraction theory \cite{Jafarpour2021Rob}. Wei and Kolter proposed a new DEQ called IBP-monDEQ that preserves the existence and uniqueness of fixed points for monDEQ while also providing theoretical guarantees for robustness from the perspective of interval bound propagation \cite{Wei2022Cer}. Li et al.\ proposed a certifiable DEQ called CerDEQ \cite{Li2022Cer}. Yang et al.\ examined correct attack methods to avoid misjudging the adversarial robustness of DEQs \cite{Yang2022ACl}. Havens et al.\ verified the $\ell2$-certified robustness of DEQs from the perspective of Lipschitz boundaries \cite{Havens2023Exp}. Jafarpour et al.\ evaluated the robustness of implicit models using mixed monotone systems theory and contraction theory \cite{Jafarpour2022Jaf}. Yang et al.\ improved the robustness of DEQs from the perspective of neural dynamics \cite{Yang2023Imp}.
Chu et al.\ proposed LyaDEQ, a robust DEQ, using Lyapunov functions \cite{Chu2024Lya}. Gao et al.\ generalized provable robustness to large-scale datasets \cite{Gao2024Cer}.

\paragraph{Implicit Layers Family}
DEQs are positioned within the broader family of implicit layers.
Implicit layers is a class of models whose outputs are defined implicitly as the solution to an equation or optimization problem, rather than being computed through a fixed sequence of operations. The history of implicit layers in deep learning dates back to the late 1980s, originating from a paper by Pineda and Almeida \cite{Pineda1987Gen, Almeida1987Ale}, where it is referred to as recurrent back-propagation  \cite{Liao2018Rev}. 
The family includes Neural ODEs (solutions to differential equations) \cite{Chen2018Neu, Grathwohl2019FFJ, Dupont2019Aug}, differentiable physics engines \cite{Avila2018End, Qiao2020Sca}, and differentiable optimization layers (solutions to optimization problems) such as OptNet \cite{Amos2017Opt}, differentiable submodular models \cite{Djolonga2017Dif, Tschiatschek2018Dif}, SparseMAP \cite{Niculae2018Spa}, SATNet \cite{Wang2019SAT}, and differentiable convex optimization \cite{Agrawal2019Dif}.
More generally, the deep declarative networks framework \cite{Gould2022Dee} provides a unifying theory that encompasses such differentiable optimization layers, defining network outputs as solutions to optimization problems. Furthermore, around the same time as DEQs, Ghaoui et al.\ proposed an implicit deep learning framework defining outputs via fixed-point equations and demonstrated its advantages in robustness and interpretability \cite{Ghaoui2021Imp}.
These models often exhibit compatible functionalities. For example, Ding et al.\ demonstrated that DEQs and NeuralODE are two sides of the same coin \cite{Ding2023Two}, while Pal et al.\ proposed a new architecture that redefines DEQ as an infinite-time neural ODE to reduce training costs \cite{Pal2023Con}. In addition, there are several applications, such as graph neural networks \cite{Gu2020Imp, Liu2021EIG, Park2022Con, Chen2022Opt} and generative models \cite{Lu2021Imp}.

\paragraph{Lipschitz Networks}
Our approach is of the kind that controls the model's Lipschitz constant, and there have been several prior studies for explicit models \cite{Zhao2021Exp, Bethune2022Pay}. Existing approaches for enforcing Lipschitz constraints fall into two categories: regularization constraints and architecture constraints. Regularization approaches \cite{Drucker1992Imp, Gulrajani2017Imp} exhibit good practical performance but do not reliably enforce Lipschitz constraints. Architecture-based approaches impose restrictions on the operator norm of weights \cite{Moustapha2017Par, Miyato2018Spe}. These reliably satisfy the Lipschitz constraint but often come at the cost of reduced expressiveness. Anil et al.\ proposed a new activation function intended to make neural networks into Lipschitz functions without sacrificing expressiveness \cite{Anil2019Sor}. In the context of DEQs, Jacobian regularization \cite{Bai2021Sta} corresponds to a regularization approach, while our study's approach is architecture-based. 
\section{Derivation of Lipschitz Constant for Each Operation}\label{sec:proof}
\subsection{Normalization}\label{sec:norm}
\begin{lem}\label{lem:gn}
Let $\text{GN} \colon \mathbb{R}^d \to \mathbb{R}^d$ be the group normalization operation, i.e., 
\begin{align*}
GN(\bm{z})_i = \gamma \frac{z_i - \mu_{\bm{z}}}{\sqrt{\sigma_{\bm{z}}^2 + \epsilon}} + \beta,
\end{align*}
where $\mu_{\bm{z}} := \frac{1}{d}\sum_{i \in [d]} z_i$ and $\sigma_{\bm{z}}^2 := \frac{1}{d}\sum_{i \in [d]} (z_i - \mu_{\bm{z}})^2$ are the mean and variance, and $\gamma, \beta \in \mathbb{R}$ are learnable affine parameters. Then, $\text{GN}$ is a $\frac{|\gamma|}{\sqrt{\epsilon}}$-Lipschitz function.
\end{lem}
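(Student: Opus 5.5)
The plan is to factor $\text{GN}$ into a linear centering map followed by a smooth radial rescaling, bound each Lipschitz constant separately, and combine them with Lemma \ref{lem:01}.

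\textbf{Step 1 (centering).} Write $P\bm{z} := \bm{z} - \mu_{\bm{z}}\mathbf{1}$, i.e.\ $P = I - \frac{1}{d}\mathbf{1}\mathbf{1}^\top$. This is the orthogonal projection onto $\mathbf{1}^\perp$, so $\|P\|_2 \le 1$ and $P$ is $1$-Lipschitz. Since $\sigma_{\bm{z}}^2 = \frac{1}{d}\|P\bm{z}\|^2$, we have
\begin{align*}
\text{GN}(\bm{z}) = \gamma\, g(P\bm{z}) + \beta\mathbf{1}, \qquad g(\bm{u}) := \frac{\bm{u}}{s(\bm{u})}, \qquad s(\bm{u}) := \sqrt{\tfrac{1}{d}\|\bm{u}\|^2 + \epsilon}.
\end{align*}
The constant shift $\beta\mathbf{1}$ does not affect Lipschitz constants, and multiplication by $\gamma$ contributes a factor $|\gamma|$. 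By Lemma \ref{lem:01}, it therefore suffices to show that $g$ is $\frac{1}{\sqrt{\epsilon}}$-Lipschitz on $\mathbb{R}^d$.

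\textbf{Step 2 (Jacobian of $g$).} Because $\epsilon>0$, $s$ never vanishes and $g$ is $C^1$ on all of $\mathbb{R}^d$. Differentiating gives
\begin{align*}
J_g(\bm{u}) = \frac{1}{s}\left(I - \frac{\bm{u}\bm{u}^\top}{d\, s^2}\right), \qquad s = s(\bm{u}).
\end{align*}
This matrix is symmetric, so its spectral norm is its largest absolute eigenvalue. On $\bm{u}^\perp$ every eigenvalue equals $1/s$. In the direction $\bm{u}$ the eigenvalue is
\begin{align*}
\frac{1}{s}\left(1 - \frac{\|\bm{u}\|^2}{d s^2}\right) = \frac{\epsilon}{s^3} \in \left(0, \frac{1}{s}\right].
\end{align*}
Hence $\|J_g(\bm{u})\|_2 = 1/s(\bm{u}) \le 1/\sqrt{\epsilon}$, using $s \ge \sqrt{\epsilon}$.

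\textbf{Step 3 (from Jacobian to Lipschitz).} For any $\bm{u},\bm{v}$, apply the fundamental theorem of calculus along the segment (valid since $\mathbb{R}^d$ is convex and $g$ is $C^1$):
\begin{align*}
g(\bm{u}) - g(\bm{v}) = \int_0^1 J_g(\bm{v} + t(\bm{u}-\bm{v}))(\bm{u}-\bm{v})\,dt.
\end{align*}
Taking norms gives $\|g(\bm{u})-g(\bm{v})\| \le \frac{1}{\sqrt{\epsilon}}\|\bm{u}-\bm{v}\|$. Combining with Step 1 via Lemma \ref{lem:01} yields $L_{\text{GN}} \le |\gamma| \cdot \frac{1}{\sqrt{\epsilon}} \cdot 1$, as claimed.

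The main obstacle is Step 2: computing the Jacobian correctly and checking that the rank-one correction only shrinks the spectrum, so that no direction exceeds $1/s$. Symmetry of $J_g$ is what reduces the spectral norm to an eigenvalue bound. A crude triangle-inequality bound on $g(\bm{u})-g(\bm{v})$ would instead lose a constant factor.
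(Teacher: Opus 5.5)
Your proof is correct and is essentially the paper's argument: both bound the spectral norm of the Jacobian by $|\gamma|/s \le |\gamma|/\sqrt{\epsilon}$ via an eigen-analysis in which the rank-one correction only shrinks the eigenvalue along the normalized direction, to $\epsilon/s^3$ in your notation (the paper's $\epsilon/(\sigma_{\bm{z}}^2+\epsilon)$ before its $1/s$ factor). The differences are cosmetic: you factor out the centering projection $P$ first, so you need only the two eigenspaces of $\frac{1}{s}\bigl(I - \frac{\bm{u}\bm{u}^\top}{d s^2}\bigr)$ rather than the paper's three for $P-\frac{1}{d}\hat{\bm{z}}\hat{\bm{z}}^\top$, and you explicitly justify the step from a Jacobian bound to a Lipschitz bound via the integral mean value identity, which the paper simply asserts.
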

\begin{proof}
If the mapping $f \colon \mathbb{R}^d \to \mathbb{R}^d$ is differentiable, then the Lipschitz constant $L_f$ of $f$ is characterized by the operator norm of the Jacobian matrix $J$, and $L_f=\sup_{\bm{x} \in \mathbb{R}^d} \| J(\bm{x}) \|_2$ holds.
Therefore, we first show that the operation of GN is differentiable.

All elementary operations used to define $\text{GN}$ are differentiable on their domains and remain well-defined because $\epsilon>0$. Concretely, the following are true:
\begin{itemize}
  \item The maps $\bm{z} \mapsto \mu_{\bm{z}}=\frac{1}{d}\bm{1}^\top \bm{z}$ and $\bm{z} \mapsto \bm{z}-\mu_{\bm{z}}\bm{1}$ are linear, hence differentiable to every order.
  \item The map $\bm{z}-\mu_{\bm{z}}\bm{1} \mapsto (\bm{z}-\mu_{\bm{z}}\bm{1})^\top (\bm{z}-\mu_{\bm{z}}\bm{1})$ is a polynomial, hence differentiable to every order.
  \item Since $\sigma_{\bm{z}}^2=\frac{1}{d}(\bm{z}-\mu_{\bm{z}}\bm{1})^\top (\bm{z}-\mu_{\bm{z}}\bm{1}) \geq 0$ and $\epsilon>0$, the scalar function $t\mapsto\sqrt{t+\epsilon}$ is smooth on the interval $[0,\infty)$; in particular, it admits derivatives of all orders when evaluated at $t=\sigma_{\bm{z}}^2$.
\end{itemize}
Note that $\bm{1}$ denotes a vector whose elements are all 1. Therefore, $\mathrm{GN}$ is differentiable at every $\bm{z}\in\mathbb{R}^d$, and in fact derivatives of arbitrary order exist (because each constituent map admits derivatives of arbitrary order and derivatives are closed under composition). Moreover, the denominator
\begin{align*}
s(\bm{z}):=\sqrt{\sigma_{\bm{z}}^2+\epsilon}
\end{align*}
satisfies $s(\bm{z})\geq\sqrt{\epsilon}>0$ for all $\bm{z}$, so no division by zero occurs; consequently, every partial derivative of the Jacobian entries is continuous and the Jacobian matrix $J(\bm{z})$ depends continuously on $\bm{z}$.

Next, we derive the Jacobian of GN. For simplicity, we introduce $\hat{z}_i := \frac{z_i - \mu_{\bm{z}}}{\sqrt{\sigma_{\bm{z}}^2+\epsilon}}$ and obtain $\text{GN}(\bm{z})_i := \gamma \hat{z}_i + \beta$. 
As preparation, we will calculate several derivatives. From the definition of $\mu_{\bm{z}}$, we have
\begin{align*}
\frac{\partial \mu_{\bm{z}}}{\partial z_k} = \frac{\partial}{\partial z_k} \left( \frac{1}{d} \sum_{i \in [d]} z_i \right) = \frac{1}{d}, \ \text{and }\ 
\frac{\partial \mu_{\bm{z}}^2}{\partial z_k} = \frac{\partial \mu_{\bm{z}}^2}{\partial \mu_{\bm{z}}} \cdot \frac{\partial \mu_{\bm{z}}}{\partial z_k} = \frac{2\mu_{\bm{z}}}{d}.
\end{align*}
In addition, from the definition of $\sigma_{\bm{z}}^2$, we have
\begin{align*}
\frac{\partial \sigma_{\bm{z}}^2}{\partial z_k} = \frac{\partial}{\partial z_k}\left( \frac{1}{d} \sum_{i \in [d]}z_i^2 - \mu_{\bm{z}}^2 \right)
= \frac{\partial}{\partial z_k}\left( \frac{1}{d} \sum_{i \in [d]} z_i^2 \right) - \frac{\partial}{\partial z_k}\mu_{\bm{z}}^2
= \frac{2}{d}(z_k - \mu_{\bm{z}}).
\end{align*}
Moreover, from the definition of $s(\bm{z})$, 
\begin{align*}
\frac{\partial s(\bm{z})}{\partial z_k} 
= \frac{\partial s(\bm{z})}{\partial \sigma_{\bm{z}}^2}\cdot \frac{\partial \sigma_{\bm{z}}^2}{\partial z_k}
= \frac{(\sigma_{\bm{z}}^2+\epsilon)^{-\frac{1}{2}}}{2} \cdot \frac{2}{d}(z_k - \mu_{\bm{z}})
=\frac{\hat{z}_k}{d}.
\end{align*}
Finally, from the quotient rule, 
\begin{align*}
\frac{\partial \hat{z}_i}{\partial z_k}
&= \frac{\partial}{\partial z_k} \frac{z_i - \mu_{\bm{z}}}{s(\bm{z})}
= \frac{\frac{\partial}{\partial z_k}(z_i - \mu_{\bm{z}})s(\bm{z}) - (z_i - \mu_{\bm{z}})\frac{\partial}{\partial z_k} s(\bm{z})}{s(\bm{z})^2}
= \frac{\left(\frac{\partial}{\partial z_k}z_i - \frac{\partial}{\partial z_k} \mu_{\bm{z}}\right)s(\bm{z}) - (z_i - \mu_{\bm{z}})\frac{\hat{z}_k}{d}}{s(\bm{z})^2} \\
&=\frac{\left(\delta_{ik} - \frac{1}{d} \right)s(\bm{z}) - (z_i - \mu_{\bm{z}})\frac{\hat{z}_k}{d}}{s(\bm{z})^2}
= \frac{1}{s(\bm{z})}\left(\delta_{ik} - \frac{1}{d} \right) - \frac{1}{s(\bm{z})}\frac{z_i - \mu_{\bm{z}}}{s(\bm{z})}\frac{\hat{z}_k}{d}
= \frac{1}{s(\bm{z})}\left\{ \left(\delta_{ik} - \frac{1}{d}\right) - \frac{1}{d} \hat{z}_i\hat{z}_k\right\},
\end{align*}
where $\delta_{ik}$ is the Kronecker delta. Then, from the chain rule, we have
\begin{align*}
J_{ik} = \frac{\partial \text{GN}(\bm{z})_i}{\partial z_k} 
= \frac{\partial \text{GN}(\bm{z})_i}{\partial \hat{z}_i} \cdot \frac{\partial \hat{z}_i}{\partial z_k}
=\gamma \frac{\partial \hat{z}_i}{\partial z_k}
=\frac{\gamma}{s(\bm{z})}\left\{ \left(\delta_{ik} - \frac{1}{d}\right) - \frac{1}{d} \hat{z}_i\hat{z}_k\right\}.
\end{align*}
Therefore, 
\begin{align*}
J(\bm{z}) = \frac{1}{s(\bm{z})}\operatorname{diag}(\gamma)\left\{ \left( I - \frac{1}{d}\bm{1}\bm{1}^\top \right) - \frac{1}{d} \hat{\bm{z}}\hat{\bm{z}}^\top \right\}.
\end{align*}
From submultiplicativity, we have
\begin{align*}
\| J(\bm{z}) \|_2 \leq \left\| \frac{1}{s(\bm{z})} \operatorname{diag}(\gamma) \right\|_2 \left\| \left( I - \frac{1}{d}\bm{1}\bm{1}^\top \right) - \frac{1}{d} \hat{\bm{z}}\hat{\bm{z}}^\top \right\|_2 
\leq \frac{|\gamma|}{s(\bm{z})}\Big\| \underbrace{\left( I - \frac{1}{d}\bm{1}\bm{1}^\top \right) - \frac{1}{d} \hat{\bm{z}}\hat{\bm{z}}^\top }_{=: A(\bm{z})} \Big\|_2.
\end{align*}
Since $\| A(\bm{z}) \|_2$ is the largest eigenvalue of $A(\bm{z})$, let us consider the eigenvalues of $A(\bm{z})$.
\begin{enumerate}[label=(\roman*)]
  \item Multiplying $A(\bm{z})$ by $\hat{\bm{z}}$ gives
  \begin{align*}
  A(\bm{z})\hat{\bm{z}} 
  = \left( I - \frac{1}{d} \bm{1}\bm{1}^\top \right)\hat{\bm{z}} - \frac{1}{d}\hat{\bm{z}}\hat{\bm{z}}^\top\hat{\bm{z}}
  = \left( \hat{\bm{z}} - \frac{1}{d} \bm{1}\left(\bm{1}^\top \hat{\bm{z}}\right) \right) - \frac{1}{d} \hat{\bm{z}} \| \hat{\bm{z}} \|_2^2
  = \left( 1- \frac{\| \hat{\bm{z}}\|_2^2}{d}\right) \hat{\bm{z}},
  \end{align*}
  where we use $\bm{1}^\top \hat{\bm{z}} = 0$. Here, from the definition of $\hat{\bm{z}}$, we have
  \begin{align*}
  \| \hat{\bm{z}} \|_2^2 
  = \left\| \frac{\bm{z} - \mu_{\bm{z}} \bm{1}}{\sqrt{\sigma^2 + \epsilon}} \right\|_2^2
  = \frac{1}{\sigma_{\bm{z}}^2+\epsilon} \| \bm{z}- \mu_{\bm{z}}\bm{1}\|_2^2
  = \frac{1}{\sigma_{\bm{z}}^2+\epsilon} \sum_{i \in [d]} (z_i - \mu_{\bm{z}})^2 
  = \frac{d\sigma_{\bm{z}}^2}{\sigma_{\bm{z}}^2 + \epsilon}.
  \end{align*}
  Hence, 
  \begin{align*}
  A(\bm{z})\hat{\bm{z}} 
  = \left( 1- \frac{\sigma_{\bm{z}}^2}{\sigma_{\bm{z}}^2+\epsilon}\right)\hat{\bm{z}}
  = \frac{\epsilon}{\sigma_{\bm{z}}^2 + \epsilon} \hat{\bm{z}}.
  \end{align*}
  Therefore, $\hat{\bm{z}}$ is an eigenvector, and its eigenvalue is $\frac{\epsilon}{\sigma_{\bm{z}}^2 + \epsilon} \in (0,1]$.
  \item Multiplying $A(\bm{z})$ by $\bm{1}$ gives
  \begin{align*}
  A(\bm{z})\bm{1}
  = \left(I - \frac{1}{d}\bm{1}\bm{1}^\top \right)\bm{1} - \frac{1}{d}\hat{\bm{z}}\hat{\bm{z}}^\top\bm{1}
  = \bm{1} - \frac{1}{d}\bm{1}\left(\bm{1}^\top\bm{1}\right) - \frac{1}{d}\hat{\bm{z}}\left( \hat{\bm{z}}^\top \bm{1}\right)
  = \bm{1} - \frac{1}{d}\bm{1}\cdot d - \frac{1}{d}\hat{\bm{z}}\cdot 0 = \bm{0},
  \end{align*}
  where we use $\bm{1}^\top \hat{\bm{z}} = 0$. Therefore, $\bm{1}$ is an eigenvector, and its eigenvalue is $0$.
  \item Multiplying $A(\bm{z})$ by a vector $\bm{v}$ that is perpendicular to both $\hat{\bm{z}}$ and $\bm{1}$,
  \begin{align*}
  A(\bm{z})\bm{v}
  = \left( I - \frac{1}{d} \bm{1}\bm{1}^\top \right)\bm{v} - \frac{1}{d}\hat{\bm{z}}\hat{\bm{z}}^\top \bm{v} 
  = \bm{v} - \frac{1}{d}\bm{1}\left( \bm{1}^\top \bm{v}\right) - \frac{1}{d}\hat{\bm{z}}\left( \hat{\bm{z}}^\top \bm{v} \right)
  = \bm{v} - \frac{1}{d} \bm{1} \cdot 0 - \frac{1}{d}\hat{\bm{z}}\cdot 0 = \bm{v},
  \end{align*}
  where we use $\bm{1}^\top \bm{v} = 0$ and $\hat{\bm{z}}^\top \bm{v} = 0$. Therefore, $\bm{v}$ is an eigenvector, and its eigenvalue is $1$.
\end{enumerate}
Hence, the largest eigenvalue of $A(\bm{z})$ is $1$, i.e., $\| A(\bm{z}) \|_2 = 1$. Consequently, we have
\begin{align*}
\| J(\bm{z}) \|_2 \leq \frac{|\gamma|}{s(\bm{z})} 
= \frac{|\gamma|}{\sqrt{\sigma_{\bm{z}}^2 + \epsilon}} 
\leq \frac{|\gamma|}{\sqrt{\epsilon}}.
\end{align*}
This completes the proof.
\end{proof}

\begin{lem}\label{lem:mgn}
Let $\text{MGN} \colon \mathbb{R}^d \to \mathbb{R}^d$ be the mean-only group normalization operation, i.e., 
\begin{align*}
MGN(\bm{z})_i = \gamma (z_i - \mu_{\bm{z}}) + \beta,
\end{align*}
where $\mu_{\bm{z}} := \frac{1}{d}\sum_{i \in [d]} z_i$ is the mean, and $\gamma, \beta \in \mathbb{R}$ are learnable affine parameters. Then, $\text{MGN}$ is a $|\gamma|$-Lipschitz function.
\end{lem}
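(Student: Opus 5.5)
The plan is to exploit that MGN is an affine map, so no differentiability or Jacobian-sup argument (as in Lemma \ref{lem:gn}) is needed. Write $\text{MGN}(\bm{z}) = \gamma P\bm{z} + \beta\bm{1}$, where $P := I - \frac{1}{d}\bm{1}\bm{1}^\top$ is the centering matrix. This holds because $(P\bm{z})_i = z_i - \frac{1}{d}\sum_{j}z_j = z_i - \mu_{\bm{z}}$. For any $\bm{z},\bm{z}'\in\mathbb{R}^d$ the bias term cancels, and linearity gives
\begin{align*}
\text{MGN}(\bm{z}) - \text{MGN}(\bm{z}') = \gamma P(\bm{z}-\bm{z}').
\end{align*}
Hence $\|\text{MGN}(\bm{z}) - \text{MGN}(\bm{z}')\|_2 \le |\gamma|\,\|P\|_2\,\|\bm{z}-\bm{z}'\|_2$.

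It remains to show $\|P\|_2 \le 1$. I would use the same eigenvector argument as cases (ii) and (iii) in the proof of Lemma \ref{lem:gn}, with the $\hat{\bm{z}}\hat{\bm{z}}^\top$ term absent. $P$ is symmetric. $P\bm{1} = \bm{0}$, and $P\bm{v} = \bm{v}$ for every $\bm{v}\perp\bm{1}$. So $P$ is the orthogonal projection onto $\bm{1}^\perp$, with eigenvalues $0$ (once) and $1$ (multiplicity $d-1$). Therefore $\|P\|_2 = 1$ whenever $d\ge 2$, and $P = 0$ when $d=1$. Either way $\|P\|_2\le 1$, which yields the claimed constant $|\gamma|$.

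There is no real obstacle here. The only step requiring care is justifying $\|P\|_2\le 1$. A direct alternative is the Pythagorean identity $\|\bm{x}\|_2^2 = \|P\bm{x}\|_2^2 + \|(I-P)\bm{x}\|_2^2$, valid because $P$ and $I-P$ are complementary orthogonal projections. Either route suffices, and I would note that the bound is tight for $d\ge2$.
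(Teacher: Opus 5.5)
Your proof is correct and follows essentially the same route as the paper. Both arguments reduce the bound to $|\gamma|\,\|P\|_2$ with $P = I - \frac{1}{d}\bm{1}\bm{1}^\top$, and both then use that $P$ is a symmetric projection: the paper checks $P^\top = P$ and $P^2 = P$, while you exhibit the eigenvectors directly. The paper reaches $\gamma P$ through the Jacobian and the characterization $L_f = \sup_{\bm{z}}\|J(\bm{z})\|_2$. You instead take the difference of the affine map directly, which is slightly more elementary and avoids the paper's typo $J_{ik} = \gamma\delta_{ik} - \frac{1}{d}$ (it should read $\gamma(\delta_{ik} - \frac{1}{d})$).
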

\begin{proof}
By the same argument as Lemma \ref{lem:gn}, since MGN is differentiable, it suffices to consider its Jacobian matrix:
\begin{align*}
J_{ik} 
= \frac{\partial \text{MGN}(\bm{z})_i}{\partial z_k} 
= \frac{\partial}{\partial z_k} \left\{ \gamma(z_i - \mu_{\bm{z}}) + \beta \right\}
= \gamma \delta_{ik} - \frac{1}{d},
\end{align*}
where $\delta_{ik}$ is the Kronecker delta. Therefore, 
\begin{align*}
J(\bm{z}) = \operatorname{diag}(\gamma) \left(I - \frac{1}{d}\bm{1}\bm{1}^\top \right).
\end{align*}
From submultiplicativity, we have
\begin{align*}
\| J(\bm{z}) \|_2 \leq \left\|\operatorname{diag}(\gamma) \right\|_2 \left\|I-\frac{1}{d}\bm{1}\bm{1}^\top\right\|_2
=|\gamma| \Big\| \underbrace{I-\frac{1}{d}\bm{1}\bm{1}^\top}_{=:P} \Big\|_2.
\end{align*}
Let us consider the eigenvalues of $P$. Here, from
\begin{align*}
P^\top 
= \left( I - \frac{1}{d}\bm{1}\bm{1}^\top \right)^\top 
= I - \frac{1}{d}\bm{1}\bm{1}^\top 
= P,
\end{align*}
and
\begin{align*}
P^2 
= \left( I - \frac{1}{d}\bm{1}\bm{1}^\top \right) \left( I - \frac{1}{d}\bm{1}\bm{1}^\top \right)
= I -\frac{1}{d}\bm{1}\bm{1}^\top - \frac{1}{d}\bm{1}\bm{1}^\top + \frac{1}{d^2}\bm{1}\bm{1}^\top \bm{1}\bm{1}^\top
= I -\frac{1}{d} \bm{1}\bm{1}^\top 
=P,
\end{align*}
$P$ is a projection matrix, and its eigenvalues are $0$ and $1$. Therefore, we have $\| P \|_2 = 1$ and
\begin{align*}
\| J(\bm{z}) \|_2 \leq |\gamma|.
\end{align*}
This completes the proof.
\end{proof}

\subsection{Activation Function}\label{sec:relu}
\begin{lem}\label{lem:relu}
Let $\text{ReLU} \colon \mathbb{R}^d \to \mathbb{R}^d$ be the ReLU function, i.e., $\text{ReLU}(\bm{z})_i := \max\{ 0, z_i \}$. Then, $\text{ReLU}$ is a $1$-Lipschitz function.
\end{lem}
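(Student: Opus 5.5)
The plan is to reduce the claim to a scalar inequality applied coordinate-wise. I will avoid the Jacobian route used for Lemma~\ref{lem:gn} and Lemma~\ref{lem:mgn}, because ReLU is not differentiable at $0$; a direct argument is cleaner.

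The key step is to show that the scalar map $r(t) := \max\{0,t\}$ satisfies $|r(s) - r(t)| \le |s - t|$ for all $s,t \in \mathbb{R}$. I will verify this by splitting into cases on the signs of $s$ and $t$:
\begin{itemize}
\item If $s,t \ge 0$, then $|r(s)-r(t)| = |s-t|$.
\item If $s,t \le 0$, then $|r(s)-r(t)| = 0$.
\item If $s > 0 \ge t$, then $|r(s)-r(t)| = s \le s - t = |s-t|$.
\item The case $t > 0 \ge s$ follows by symmetry.
\end{itemize}
An alternative one-line route uses $\max\{0,t\} = (t+|t|)/2$ together with the reverse triangle inequality $\bigl||s|-|t|\bigr| \le |s-t|$.

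Next, for $\bm{x},\bm{y} \in \mathbb{R}^d$, I will expand the squared norm componentwise:
\begin{align*}
\|\text{ReLU}(\bm{x}) - \text{ReLU}(\bm{y})\|^2 = \sum_{i \in [d]} |r(x_i) - r(y_i)|^2 \le \sum_{i \in [d]} |x_i - y_i|^2 = \|\bm{x}-\bm{y}\|^2 .
\end{align*}
Taking square roots then gives the $1$-Lipschitz bound in the sense of Definition~\ref{dfn:lipschitz}.

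If one also wants to show that the constant $1$ is sharp, I would take $\bm{x} = 2\bm{e}_1$ and $\bm{y} = \bm{e}_1$, for which equality holds. The statement only asserts an upper bound, so this check is optional.

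I do not anticipate a real obstacle. The only subtle point is the mixed-sign case of the scalar inequality, and that is exactly where the case analysis (or the absolute-value identity) is needed.
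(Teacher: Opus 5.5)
Your proposal is correct and follows the paper's proof essentially step for step. Both run a sign-based case analysis for the scalar map $\max\{0,t\}$, then sum the squared coordinate-wise bounds and take square roots; your symmetry shortcut for the fourth case, the $(t+|t|)/2$ alternative, and the optional sharpness check are harmless additions.
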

\begin{proof}
As preparation, consider the scalar version of the ReLU function $g \colon \mathbb{R} \to \mathbb{R}$, i.e., $g(z) := \max\{ 0, z\}$.
\begin{itemize}
\item If $z_1 \geq 0$ and $z_2 \geq 0$, then we have
\begin{align*}
| g(z_1) - g(z_2) | = | z_1 - z_2|. 
\end{align*}
\item If $z_1 \leq 0$ and $z_2 \leq 0$, then we have
\begin{align*}
| g(z_1) - g(z_2) | = 0 \leq | z_1 - z_2|. 
\end{align*}
\item If $z_1 \geq 0$ and $z_2 \leq 0$, then we have
\begin{align*}
| g(z_1) - g(z_2) | = | z_1 - 0| = z_1 \leq z_1 - z_2 = |z_1 - z_2|. 
\end{align*}
\item If $z_1 \leq 0$ and $z_2 \geq 0$, then we have
\begin{align*}
| g(z_1) - g(z_2) | = | 0 - z_2 | = z_2 \leq z_2 - z_1 \leq |z_1 - z_2|. 
\end{align*}
\end{itemize}
Hence, for all $z_1, z_2 \in \mathbb{R}$,
\begin{align*}
|g(z_1) - g(z_2)| \leq |z_1 - z_2|.
\end{align*}
From the scalar result, we have for each coordinate
\begin{align*}
|\text{ReLU}(\bm{x})_i - \text{ReLU}(\bm{y})_i | \leq |x_i - y_i|.
\end{align*}
Therefore, we have
\begin{align*}
\| \text{ReLU}(\bm{x}) - \text{ReLU}(\bm{y}) \|_2 
= \left( \sum_{i \in [d]} | \text{ReLU}(\bm{x})_i - \text{ReLU}(\bm{y})_i |^2 \right)^{\frac{1}{2}}
\leq \left(\sum_{i \in [d]} |x_i - y_i|^2 \right)^{\frac{1}{2}}
= \| \bm{x} - \bm{y} \|_2.
\end{align*}
This completes the proof.
\end{proof}

\begin{lem}\label{lem:srelu}
Let $\text{SReLU} \colon \mathbb{R}^d \to \mathbb{R}^d$ be the SReLU function, i.e., $\text{SReLU}(\bm{z})_i := \max\{ 0, az_i \}$ $(a \in (0,1])$. Then, $\text{SReLU}$ is an $a$-Lipschitz function.
\end{lem}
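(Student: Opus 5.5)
The plan is to follow the template of the proof of Lemma \ref{lem:relu} as closely as possible: first establish a scalar inequality, then lift it coordinatewise to the Euclidean norm. The key observation is that, since $a>0$, we have $\max\{0, az\} = a\max\{0,z\}$ for every $z\in\mathbb{R}$. Multiplying by a positive constant preserves which of $0$ and $z$ is larger, so the identity follows. Consequently $\text{SReLU} = a\cdot\text{ReLU}$ as mappings on $\mathbb{R}^d$.

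With this identity in hand, for any $\bm{x},\bm{y}\in\mathbb{R}^d$ we can write
\begin{align*}
\| \text{SReLU}(\bm{x}) - \text{SReLU}(\bm{y}) \|_2 = a\,\| \text{ReLU}(\bm{x}) - \text{ReLU}(\bm{y}) \|_2 \leq a\,\| \bm{x} - \bm{y} \|_2,
\end{align*}
where the inequality is exactly Lemma \ref{lem:relu}. Equivalently, SReLU is the composition of the scaling map $\bm{z}\mapsto a\bm{z}$, which is $a$-Lipschitz, with ReLU. Lemma \ref{lem:01} then gives the constant $a\cdot 1 = a$ directly.

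If one prefers a self-contained argument mirroring Lemma \ref{lem:relu}, the four-case analysis on the signs of $z_1,z_2$ can be repeated with $g(z)=\max\{0,az\}$. The cases give:
\begin{itemize}
\item both nonnegative: $|g(z_1)-g(z_2)| = a|z_1-z_2|$;
\item both nonpositive: $|g(z_1)-g(z_2)| = 0$;
\item mixed signs, say $z_1\ge 0\ge z_2$: $|g(z_1)-g(z_2)| = az_1 \le a(z_1-z_2)$, and symmetrically for the other ordering.
\end{itemize}
The coordinatewise bound $|g(x_i)-g(y_i)|\le a|x_i-y_i|$ is then summed in squares, and the factor $a^2$ is pulled out of the square root.

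The only real step is the identity $\max\{0,az\}=a\max\{0,z\}$, which requires $a>0$; the hypothesis $a\in(0,1]$ supplies this, and the upper bound $a\le 1$ is not needed. The bound is tight: taking $\bm{x},\bm{y}$ with positive entries gives equality. So the Lipschitz constant is exactly $a$, consistent with $L_{\text{SReLU}}=a$ as used in Section \ref{sec:3.3}.
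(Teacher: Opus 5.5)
Your proof is correct and essentially the paper's: the paper obtains the coordinatewise bound $|\text{SReLU}(\bm{x})_i - \text{SReLU}(\bm{y})_i| \le a|x_i - y_i|$ by the same case analysis as Lemma \ref{lem:relu}, sums the squares, and pulls out $a^2$, which is your second argument. Your primary route, the factorization $\text{SReLU} = a\cdot\text{ReLU}$ (valid since $a>0$), is a quicker way to inherit that bound from Lemma \ref{lem:relu} instead of redoing the cases, and your tightness remark additionally shows the constant $a$ is exact.
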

\begin{proof}
By a similar argument to Lemma \ref{lem:relu}, we have for each coordinate
\begin{align*}
|\text{SReLU}(\bm{x})_i - \text{SReLU}(\bm{y})_i | \leq a |x_i - y_i|.
\end{align*}
Therefore, we have
\begin{align*}
\| \text{SReLU}(\bm{x}) - \text{SReLU}(\bm{y}) \|_2
= \left( \sum_{i \in [d]} | \text{SReLU}(\bm{x})_i - \text{SReLU}(\bm{y})_i |^2 \right)^{\frac{1}{2}}
\leq \left(\sum_{i \in [d]} a^2|x_i - y_i|^2 \right)^{\frac{1}{2}}
= a\| \bm{x} - \bm{y} \|_2.
\end{align*} 
This completes the proof.
\end{proof}

\subsection{Dropout}\label{sec:drop}
\begin{lem}\label{lem:drop}
Let $\text{Drop} \colon \mathbb{R}^d \to \mathbb{R}^d$ be the variational dropout function, i.e., $\text{Drop}(\bm{z}) := \frac{1}{1-p}\bm{m} \odot \bm{z}$, where $p \in (0,1)$ and $\bm{m} \sim \text{Bernouli}(p)$. Then $\text{Drop}$ is a $\frac{1}{1-p}$-Lipschitz function.
\end{lem}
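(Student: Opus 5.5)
The plan is to treat the mask $\bm{m}$ as fixed once it has been sampled. In variational dropout the same mask is reused across all fixed-point iterations, so $\text{Drop}$ is a deterministic linear map of $\bm{z}$. Concretely,
\begin{align*}
\text{Drop}(\bm{z}) = \frac{1}{1-p} D_{\bm{m}} \bm{z}, \qquad D_{\bm{m}} := \operatorname{diag}(m_1, \ldots, m_d),
\end{align*}
where each $m_i \in \{0,1\}$. The Lipschitz constant of a linear map is just its spectral norm, in the same spirit as Lemma \ref{lem:conv} with zero bias. Alternatively, we can argue as in Lemma \ref{lem:mgn}: the Jacobian is constant and equals $\frac{1}{1-p} D_{\bm{m}}$.

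The key steps are as follows. First, write
\begin{align*}
\text{Drop}(\bm{x}) - \text{Drop}(\bm{y}) = \frac{1}{1-p}\, \bm{m} \odot (\bm{x} - \bm{y}).
\end{align*}
Second, bound this coordinatewise: $|m_i (x_i - y_i)| \leq |x_i - y_i|$, since $m_i \in \{0,1\}$. Third, sum the squares and take square roots, exactly as in the final display of the proofs of Lemmas \ref{lem:relu} and \ref{lem:srelu}. This gives
\begin{align*}
\| \text{Drop}(\bm{x}) - \text{Drop}(\bm{y}) \|_2 \leq \frac{1}{1-p} \| \bm{x} - \bm{y} \|_2.
\end{align*}
Equivalently, $\| D_{\bm{m}} \|_2 = \max_i |m_i| \leq 1$, because $D_{\bm{m}}$ is diagonal with entries in $\{0,1\}$.

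There is no real technical obstacle. The only subtle point is interpretive: the bound must hold for every realization of the random mask, not in expectation. Because the bound $|m_i| \leq 1$ is uniform over all possible masks, the constant $\frac{1}{1-p}$ is valid almost surely, whatever the sampled $\bm{m}$. If at least one $m_i = 1$, the bound is attained, so the constant is tight. In the degenerate case $\bm{m} = \bm{0}$, the map is constant, and $\frac{1}{1-p}$ remains a valid, though not tight, upper bound.
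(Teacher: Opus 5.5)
Your proposal is correct and follows the paper's proof: fix a realization of the mask, bound each coordinate by $|m_i (x_i - y_i)| \leq |x_i - y_i|$ using $m_i \in \{0,1\}$, then sum squares to obtain the constant $\frac{1}{1-p}$. Your extra remarks — the spectral-norm view of $\frac{1}{1-p}\operatorname{diag}(\bm{m})$, and tightness when some $m_i = 1$ — are correct but not needed for the statement.
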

\begin{proof}
Fix a realization of the mask $\bm{m} = (m_1, \ldots, m_d)^\top$ with $m_i \in \{0,1 \}$. For each coordinate $i \in [d]$, we have
\begin{align*}
|\text{Drop}(\bm{x})_i - \text{Drop}(\bm{y})_i | 
= \frac{1}{1-p}|m_i (x_i - y_i)| 
\leq \frac{1}{1-p}|x_i - y_i|,
\end{align*}
where we use $|m_i| \leq 1$. Therefore, we have
\begin{align*}
\| \text{Drop}(\bm{x}) - \text{Drop}(\bm{y}) \|_2
= \left( \sum_{i \in [d]} |\text{Drop}(\bm{x})_i - \text{Drop}(\bm{y})_i |^2\right)^{\frac{1}{2}}
\leq \left( \sum_{i \in [d]} \left(\frac{1}{1-p}\right)^2|x_i - y_i|^2\right)^{\frac{1}{2}}
= \frac{1}{1-p} \| \bm{x} - \bm{y} \|_2.
\end{align*}
This completes the proof.
\end{proof}

\subsection{Convolution}\label{sec:conv}
\begin{lem}\label{lem:conv}
Let $\text{Conv} \colon \mathbb{R}^d \to \mathbb{R}^d$ be the convolution function, i.e., $\text{Conv}(\bm{z}) := W\bm{z} + \bm{b}$, where $W$ is a parameter matrix from the convolutional kernel and $\bm{b} \in \mathbb{R}^d$ is a bias term. Then, $\text{Conv}$ is a $\| W \|_2$-Lipschitz function.
\end{lem}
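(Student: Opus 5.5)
The plan is a direct verification from Definition \ref{dfn:lipschitz}, with no need for differentiability. Because $\text{Conv}$ is affine, the bias cancels in the difference of two images. For any $\bm{x}, \bm{y} \in \mathbb{R}^d$,
\begin{align*}
\text{Conv}(\bm{x}) - \text{Conv}(\bm{y}) = (W\bm{x} + \bm{b}) - (W\bm{y} + \bm{b}) = W(\bm{x} - \bm{y}).
\end{align*}
Hence the Lipschitz question reduces entirely to bounding the linear map $W$ applied to $\bm{x} - \bm{y}$.

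The next step is to invoke the definition of the induced operator (spectral) norm, $\| W \|_2 := \sup_{\bm{v} \neq \bm{0}} \| W\bm{v} \|_2 / \| \bm{v} \|_2$. Applying this with $\bm{v} = \bm{x} - \bm{y}$ gives
\begin{align*}
\| \text{Conv}(\bm{x}) - \text{Conv}(\bm{y}) \|_2 = \| W(\bm{x}-\bm{y}) \|_2 \leq \| W \|_2 \| \bm{x} - \bm{y} \|_2.
\end{align*}
This is exactly the claim. The case $\bm{x} = \bm{y}$ is trivial.

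For consistency with the Jacobian-based arguments in Lemmas \ref{lem:gn} and \ref{lem:mgn}, one can also note that $J(\bm{z}) = W$ for every $\bm{z}$. Then $\sup_{\bm{z}} \| J(\bm{z}) \|_2 = \| W \|_2$, which also shows the constant is tight: choose $\bm{x} - \bm{y}$ to be a top right-singular vector of $W$.

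The only real obstacle is conceptual rather than computational. The convolution must be identified with a matrix $W$ acting on the flattened feature map, including channels, padding and stride. Since the operation is linear in the input plus a bias, such a $W$ always exists, possibly a nonsquare one when stride or channel counts change. The argument above does not use squareness, so it goes through with $\| \cdot \|_2$ taken as the largest singular value.
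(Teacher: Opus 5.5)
Your proof is correct and follows essentially the same route as the paper: the bias cancels to leave $W(\bm{x}-\bm{y})$, which is bounded by $\|W\|_2\|\bm{x}-\bm{y}\|_2$ using the spectral norm. Your remarks on tightness via a top right-singular vector and on nonsquare $W$ are also correct, but they go beyond what the paper proves.
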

\begin{proof}
From the definition of the operation $\text{Conv}$, for all $\bm{z}_1, \bm{z}_2 \in \mathbb{R}^d$, we have
\begin{align*}
    \| \text{Conv}(\bm{z}_1) - \text{Conv}(\bm{z}_2) \|_2 
    &= \| (W\bm{z}_1 + \bm{b}) - (W\bm{z}_2 + \bm{b}) \|_2 \\
    &= \| W(\bm{z}_1 - \bm{z}_2) \|_2 \\
    &\leq \| W \|_2 \| \bm{z}_1 - \bm{z}_2 \|_2
\end{align*}
This completes the proof.
\end{proof}

\subsection{Upsample}\label{sec:upsample}
\begin{lem}\label{lem:up}
Let $\text{Up} \colon \mathbb{R}^{H \times W} \to \mathbb{R}^{sH\times tW}$ be the upsample operation, where $s,t \in \mathbb{N}$ are the vertical and horizontal scale factors, respectively. Then, $\text{Up}$ is a $\sqrt{st}$-Lipschitz function.
\end{lem}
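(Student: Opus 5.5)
Nearest-neighbor upsampling is linear, so $\text{Up}(\bm{x}) - \text{Up}(\bm{y}) = \text{Up}(\bm{x} - \bm{y})$. It is therefore enough to show $\| \text{Up}(\bm{z}) \|_{\mathrm{F}} = \sqrt{st}\,\| \bm{z} \|_{\mathrm{F}}$ for every $\bm{z} \in \mathbb{R}^{H \times W}$. Here the Frobenius norm is the Euclidean norm $\|\cdot\|_2$ of the flattened vector, as in the rest of the paper. This would give the claimed constant, and in fact equality, so $\sqrt{st}$ is the exact Lipschitz constant.

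First I will write the operation explicitly. For $(p,q) \in [sH] \times [tW]$ we have $\text{Up}(\bm{z})_{p,q} = z_{\lceil p/s \rceil, \lceil q/t \rceil}$. Equivalently, each entry $z_{i,j}$ is copied into the $s \times t$ block of output indices
\begin{align*}
B_{i,j} := \{(p,q) : (i-1)s < p \leq is,\ (j-1)t < q \leq jt\}.
\end{align*}
The blocks $B_{i,j}$, $(i,j) \in [H]\times[W]$, partition $[sH]\times[tW]$, and each has exactly $st$ elements. In matrix form, $\text{Up}(\bm{z}) = (I_H \otimes \bm{1}_s)\,\bm{z}\,(I_W \otimes \bm{1}_t)^\top$. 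This gives an alternative route: the columns of $I_H \otimes \bm{1}_s$ have disjoint supports and norm $\sqrt{s}$, so this matrix has all singular values equal to $\sqrt{s}$, and likewise the right factor has all singular values equal to $\sqrt{t}$. Either route leads to the same result.

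Next I will split the sum of squares over the blocks:
\begin{align*}
\| \text{Up}(\bm{z}) \|_2^2 = \sum_{(i,j)\in[H]\times[W]} \sum_{(p,q)\in B_{i,j}} z_{i,j}^2 = st \sum_{i,j} z_{i,j}^2 = st\,\|\bm{z}\|_2^2.
\end{align*}
Taking square roots and applying this to $\bm{z} = \bm{x} - \bm{y}$ gives $\|\text{Up}(\bm{x}) - \text{Up}(\bm{y})\|_2 = \sqrt{st}\,\|\bm{x} - \bm{y}\|_2$. With $s = t = 2^{\text{level\_diff}}$, this is the constant $2^{\text{level\_diff}}$ used in the main text.

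The only step needing care is checking that the blocks form a genuine partition with uniform size $st$, since that is what makes the block sum factor cleanly. This holds because $s$ and $t$ are integers and the output dimensions are exactly $sH$ and $tW$. If the maps act channel-wise on multichannel feature maps, the same identity holds channel by channel, and summing over channels preserves the constant.
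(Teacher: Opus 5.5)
Your proof is correct and follows the paper's argument: each entry of $\bm{x}-\bm{y}$ is copied exactly $st$ times, so the squared norm scales by $st$, which gives the equality $\|\text{Up}(\bm{x})-\text{Up}(\bm{y})\|_2=\sqrt{st}\,\|\bm{x}-\bm{y}\|_2$. Your explicit block partition $B_{i,j}$ (and the Kronecker-product factorization you mention as an alternative) just spells out the counting step that the paper states in one line.
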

\begin{proof}
When input $\bm{x} \in \mathbb{R}^{H \times W}$ is upscaled using nearest-neighbor interpolation to construct $\text{Up}(\bm{x}) \in \mathbb{R}^{sH \times t W}$, each element of $\bm{x}$ appears $s \times t$ times in $\text{Up}(\bm{x})$. Therefore, we have
\begin{align*}
\| \text{Up}(\bm{x}) - \text{Up}(\bm{y}) \|_2
=\sqrt{st \sum_{i \in [H\times W]}(x_i - y_i)^2}
=\sqrt{st} \|\bm{x} - \bm{y} \|_2.
\end{align*}
This completes the proof.
\end{proof}

\subsection{Backward Pass}\label{sec:backward}
\begin{lem}\label{lem:back}
Let $T \colon \mathcal{Z} \to \mathcal{Z}$ be the mapping appearing in the backward pass, i.e., $T(\bm{\mathrm{x}}) := \bm{\mathrm{x}} J_{f_{\bm{\theta}}}(\bm{z}^\star) + \frac{\partial l}{\partial \bm{z}^\star}$. Then, $T$ is a $\| J_{f_{\bm{\theta}}}(\bm{z}^\star) \|_2$-Lipschitz mapping.
\end{lem}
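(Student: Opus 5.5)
The plan is to treat $T$ as an affine map on $\mathcal{Z}$, exactly as Lemma \ref{lem:conv} treats a convolution. The key observation is that the only $\bm{\mathrm{x}}$-dependence of $T$ is the linear term $\bm{\mathrm{x}} J_{f_{\bm{\theta}}}(\bm{z}^\star)$. Here $J_{f_{\bm{\theta}}}(\bm{z}^\star)$ is a fixed matrix, since $\bm{z}^\star$ has already been computed in the forward pass. The vector $\frac{\partial l}{\partial \bm{z}^\star}$ does not depend on $\bm{\mathrm{x}}$ either, so it plays the role of the bias $\bm{b}$. Throughout I identify $\mathcal{Z}$ with $\mathbb{R}^d$, $d=\sum_i d_i$, via concatenation. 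This is legitimate because $\|\cdot\|_{\mathcal{Z}}$ coincides with the Euclidean norm of the concatenated vector, as noted in Section \ref{sec:3.4}. I treat $\bm{\mathrm{x}}$ as a row vector, consistent with \eqref{eq:back-fix}.

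The steps are as follows. First, for arbitrary $\bm{\mathrm{x}}_1,\bm{\mathrm{x}}_2\in\mathcal{Z}$, write $T(\bm{\mathrm{x}}_1)-T(\bm{\mathrm{x}}_2) = (\bm{\mathrm{x}}_1-\bm{\mathrm{x}}_2)J_{f_{\bm{\theta}}}(\bm{z}^\star)$; the constant term cancels. Second, bound the norm of this difference by $\|\bm{\mathrm{x}}_1-\bm{\mathrm{x}}_2\|_2\,\|J_{f_{\bm{\theta}}}(\bm{z}^\star)\|_2$.

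The only point needing care is this second bound, because $\bm{\mathrm{x}}$ multiplies $J$ from the left. To handle it, I would transpose: $\|\bm{v}J\|_2=\|J^\top\bm{v}^\top\|_2\le\|J^\top\|_2\|\bm{v}\|_2$. I would then use $\|J^\top\|_2=\|J\|_2$, since a matrix and its transpose share the same singular values. With this, the statement reduces to the computation in Lemma \ref{lem:conv} applied with $W=J_{f_{\bm{\theta}}}(\bm{z}^\star)^\top$ and $\bm{b}=(\partial l/\partial\bm{z}^\star)^\top$.

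I do not expect any real obstacle. The remaining work is bookkeeping: state clearly that $\bm{z}^\star$ is fixed, so that $T$ is genuinely affine, and keep the row-vector convention consistent with \eqref{eq:back-fix} when applying the spectral-norm bound.
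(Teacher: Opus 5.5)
Your proposal is correct and follows essentially the same route as the paper: cancel the constant term $\frac{\partial l}{\partial \bm{z}^\star}$, transpose the row-vector product to $J_{f_{\bm{\theta}}}(\bm{z}^\star)^\top(\bm{\mathrm{x}}-\bm{\mathrm{y}})^\top$, and bound its norm by $\|J_{f_{\bm{\theta}}}(\bm{z}^\star)^\top\|_2\,\|\bm{\mathrm{x}}-\bm{\mathrm{y}}\|_{\mathcal{Z}} = \|J_{f_{\bm{\theta}}}(\bm{z}^\star)\|_2\,\|\bm{\mathrm{x}}-\bm{\mathrm{y}}\|_{\mathcal{Z}}$. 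The paper uses implicitly that $\bm{z}^\star$ is fixed, so $T$ is affine, and that $\|\cdot\|_{\mathcal{Z}}$ is the Euclidean norm of the concatenated vector; you state both explicitly.
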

\begin{proof}
From the definition of $T$ and $\| \cdot \|_\mathcal{Z}$, we have
\begin{align*}
\| T(\bm{\mathrm{x}}) - T(\bm{\mathrm{y}}) \|_\mathcal{Z} 
&= \left\| (\bm{\mathrm{x}}-\bm{\mathrm{y}}) J_{f_{\bm{\theta}}}(\bm{z}^\star) \right\|_\mathcal{Z} \\
&= \left\| J_{f_{\bm{\theta}}}(\bm{z}^\star)^\top (\bm{\mathrm{x}}-\bm{\mathrm{y}})^\top \right\|_\mathcal{Z} \\
&\leq \left\| J_{f_{\bm{\theta}}}(\bm{z}^\star)^\top \right\|_2 \left\| (\bm{\mathrm{x}}-\bm{\mathrm{y}})^\top \right\|_\mathcal{Z} \\
&\leq \left\| J_{f_{\bm{\theta}}}(\bm{z}^\star) \right\|_2 \left\| \bm{\mathrm{x}}-\bm{\mathrm{y}} \right\|_\mathcal{Z}.
\end{align*}
This completes the proof.
\end{proof}

\section{Additional Experimental Results}
\subsection{Experimental Settings and Hyperparameters}\label{sec:appexp}
In our experiments, all models were trained on single NVIDIA GeForce RTX 4090 GPU and Intel Core i9 13900KF CPU.
\UPDATE{The code is available at \url{https://github.com/iiduka-researches/Lipschitz_mdeq}.
All CIFAR-10 \cite{Krizhevsky2009Lea} experiments in the main text were run for 200 epochs. A batch size of 128 was used except for Jacobian regularization, where a batch size of 96 was employed. Adam \cite{Kingma2015Ada} was used to optimize the empirical loss, with an initial learning rate of 0.001. Other hyperparameters followed prior work, and the configuration is publicly available at \url{https://github.com/iiduka-researches/Lipschitz_mdeq}. For all experiments, the Anderson Acceleration fixed-point solver was used for both the forward and backward passes. The hyperparameters for the phantom gradient were set as 5 iterations and a decay rate of 0.5.}

\subsection{Comparison of Fixed Point Convergence}\label{sec:exp_convergence}
\begin{figure}[htbp]
\begin{minipage}[t]{1\linewidth}
\centering%
\includegraphics[width=0.7\linewidth]{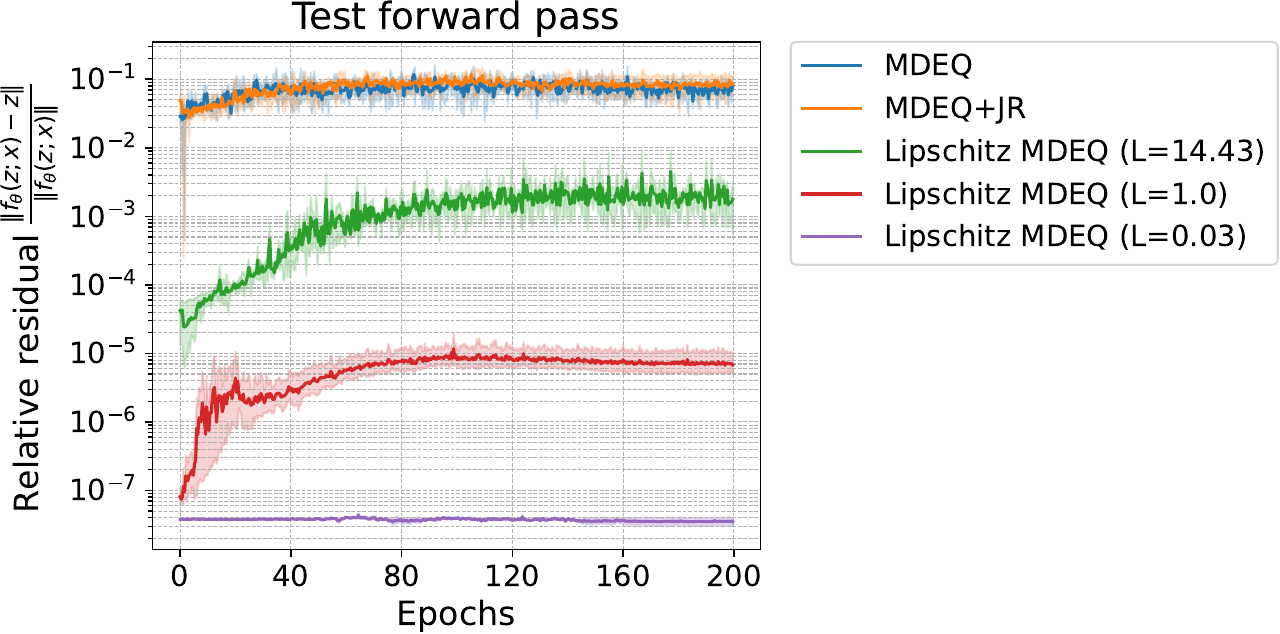}%
\end{minipage}%
\caption{Comparison of fixed-point convergence in the forward pass of testing.}
\label{fig:test_convergence}
\end{figure}

\UPDATE{
\subsection{Experiments on ImageNet}\label{sec:exp_image}
We performed the same experiments on the ImageNet dataset as described in Section \ref{sec:exp}.
In all experiments, we control the Lipschitz constant $L$ by fixing $L_{\text{MGN}}=1.0$, $L_{\text{Conv}^\star}=2.0$, $\alpha_1 = 0.5$, $\alpha_2=0.3$, $n=4$, and $p=0$ and varying $L_{\text{SReLU}}=\{ 0.1, 0.4 \}$. Specifically, $L=\{0.026, 0.794\}$ are obtained when $L_{\text{SReLU}}=\{0.1, 0.4\}$, respectively. All fixed-point iteration evaluation metrics used relative residual, with a threshold of 1e-3. The maximum iteration was set to 14 for both the forward and backward passes. 

\begin{table*}[ht]
    \centering
    \caption{\UPDATE{Comparison of different acceleration techniques for MDEQ on ImageNet. We report top-1 and top-5 accuracy, total training speed, and average number of fixed-point iterations (NFEs) and runtime (Time) for the forward and backward passes. Best results in each column are highlighted in \textbf{bold}.}}
    \label{tab:speedup2}
    \resizebox{\textwidth}{!}{%
    \begin{tabular}{lcccccccccc}
        \toprule
        \textbf{Model and method} & \textbf{Size} & \textbf{\UPDATE{Speed-up}} & \textbf{top-1 Acc.} & \textbf{top-5 Acc.} & NFEs (Train Fwd) & NFEs (Train Bwd)& NFEs (Test Fwd) \\
        \midrule
        MDEQ & 18M &1.0$\times$ & 61.83 & 83.37 & 14.0 & 14.0 & 14.0 \\
        Lipschitz MDEQ ($L=0.794$) &18M & 1.15$\times$ & 62.86 & 84.13 & 13.3 & 12.9 & 13.3 \\
        Lipschitz MDEQ ($L=0.026$) &18M & 1.42$\times$ & 60.89 & 82.74 & \textbf{2.0} & \textbf{2.0} & \textbf{2.0} \\
        Lipschitz MDEQ ($L=0.794$) - Conv$^\star$ &18M & 1.34$\times$ & \textbf{64.21} & \textbf{85.01} & 7.6 & 4.4 & 7.7 \\
        Lipschitz MDEQ ($L=0.026$) - Conv$^\star$ &18M & \textbf{1.48}$\times$ & 62.95 & 84.02 & \textbf{2.0} & \textbf{2.0} & \textbf{2.0}
    \end{tabular}}
    \vspace*{-10pt}
\end{table*}

In both the forward pass during training and testing and the backward pass during training, Lipschitz MDEQ achieves significantly faster fixed-point convergence than existing methods. However, despite a reduction in the number of fixed-point iterations, the speed-up effect is more modest compared to the case of CIFAR10. This is because we set the maximum number of fixed-point iterations to 14 across all settings to reduce computational cost. Typically, experiments using the ImageNet dataset set the maximum number of iterations to 22--30.
Therefore, it is expected that MDEQ's test accuracy could be even higher. Simultaneously, since increasing the number of iterations does not cause MDEQ's fixed-point iterations to converge, it is anticipated that the acceleration effect of our model will be further emphasized.

The results of the ablation study in Section \ref{sec:ablation} indicate that the setting yielding the greatest experimental acceleration effect is reverting the Lipschitz MDEQ's Conv$^\star$ operation back to the Conv operation. Therefore, Table \ref{tab:speedup2} also reports the results when the Conv$^\star$ operation is removed.
Even in the case of ImageNet, removing Conv$^\star$ modifications proved effective, leaving the challenge of better modifications to the convolutional operations.
}

\end{document}